\theoremstyle{plain}
\titleformat{\section}[block]{\color{black}\Large\bfseries}{\thesection}{1em}{}
\titleformat{\subsection}[hang]{\color{black}\large\bfseries}{\thesubsection}{1em}{}
\titleformat{\subsubsection}[hang]{\color{black}\large\bfseries}{\thesubsubsection}{1em}{}
\theoremstyle{plain}
\newtheorem{thm}{Theorem}
\newtheorem{theorem}[thm]{Theorem}
\newtheorem{prop}[thm]{Proposition}
\newtheorem{lem}[thm]{Lemma}
\newtheorem{corollary}[thm]{Corollary}
\newtheorem{definition}{Definition}
\newtheorem{remark}{Remark}
\newtheorem*{assumption*}{Assumption}
\newtheorem{note}[thm]{Notation}
\newtheorem{fact}[thm]{Fact}
\newtheorem*{conj*}{Conjecture}
\newtheorem*{defn*}{Definition}
\newtheorem*{note*}{Notation}
\newtheorem*{fact*}{Fact}
\newtheorem*{ques*}{Question}
\newtheorem*{exer*}{Exercise}
\newtheorem*{prob*}{Problem}
\newtheorem*{algo*}{Algorithm}
\Crefname{defn}{Definition}{Definitions}
\Crefname{definition}{Definition}{Definitions}
\Crefname{rmk}{Remark}{Remarks}
\Crefname{prop}{Proposition}{Propositions}
\Crefname{thm}{Theorem}{Theorems}
\Crefname{theorem}{Theorem}{Theorems}
\Crefname{cor}{Corollary}{Corollaries}
\Crefname{lemma}{Lemma}{Lemmas}
\Crefname{algo}{Algorithm}{Algorithms}
\Crefname{ex}{Example}{Examples}
\Crefname{answer}{Answer}{Answers}
\Crefname{ques}{Question}{Questions}
\Crefname{prob}{Problem}{Problems}
\Crefname{assumption}{Assumption}{Assumptions}
\Crefname{note}{Notation}{Notations}
\Crefname{fact}{Fact}{Facts}
\Crefname{exer}{Exercise}{Exercises}
\Crefname{conj}{Conjecture}{Conjectures}
\Crefname{claim}{Claim}{Claims}
\Crefname{figure}{Figure}{Figures}
\Crefname{subsection}{Subsection}{Subsections}
\Crefname{section}{Section}{Sections}
\Crefname{appendix}{Appendix}{Appendices}
\Crefname{table}{Table}{Tables}
\patchcmd{\algorithmic}{\addtolength{\ALC@tlm}{\leftmargin} }{\addtolength{\ALC@tlm}{\leftmargin}}{}{}
\newcommand{\nonl}{\renewcommand{\nl}{\let\nl}}
\newcommand\numberthis{\addtocounter{equation}{1}\tag{\theequation}}
\crefname{algocf}{Algorithm}{Algorithms}
\Crefname{algocfproc}{Algorithm}{Algorithms}
\Crefname{definition}{Definition}{Definitions}
\let\cref@old@stepcounter\stepcounter
\def\stepcounter#1{%
  \cref@old@stepcounter{#1}%
  \cref@constructprefix{#1}{\cref@result}%
  \@ifundefined{cref@#1@alias}%
    {\def\@tempa{#1}}%
    {\def\@tempa{\csname cref@#1@alias\endcsname}}%
  \protected@edef\cref@currentlabel{%
    [\@tempa][\arabic{#1}][\cref@result]%
    \csname p@#1\endcsname\csname the#1\endcsname}}
\newcommand{\mytag}[2]{%
  \text{#1}%
  \@bsphack
  \begingroup
    \@onelevel@sanitize\@currentlabelname
    \edef\@currentlabelname{%
      \expandafter\strip@period\@currentlabelname\relax.\relax\@@@%
    }%
    \protected@write\@auxout{}{%
      \string\newlabel{#2}{%
        {#1}%
        {\thepage}%
        {\@currentlabelname}%
        {\@currentHref}{}%
      }%
    }%
  \endgroup
  \@esphack
}
\definecolor{aqua}{rgb}{0.0, 1.0, 1.0}
\definecolor{caribbeangreen}{rgb}{0.0, 0.8, 0.6}
\definecolor{azure}{rgb}{0.0, 0.5, 1.0}
\definecolor{charcoal}{rgb}{0.21, 0.27, 0.31}
\def\clearwf{\par{\count@\c@WF@wrappedlines\zz}\par}
\def\zz{{%
\ifnum\count@>\@ne
\noindent\mbox{zz}\\%
\advance\count@\m@ne
\expandafter\zz
\else
\ifhmode\unskip\unpenalty\fi
\fi}}
\title{
Adaptive Smooth Non-Stationary Bandits
}
\author{%
Joe Suk\\
Columbia University\\
\href{mailto:joe.suk@columbia.edu}{{ \texttt{joe.suk@columbia.edu}}}%
}
\date{}
\newcommand{\rev}[1]{#1}
\begin{document}
\maketitle

\begin{abstract}
	We study a $k$-armed non-stationary bandit model where rewards change {\em smoothly}, as captured by H\"{o}lder class assumptions on rewards as functions of time.
	Such smooth changes are parametrized by a H\"{o}lder exponent $\beta$ and coefficient $\lambda$.
	While various sub-cases of this general model have been studied in isolation, we first establish the minimax dynamic regret rate generally for all $k,\beta,\lambda$.

	Next, we show this optimal dynamic regret can be attained {\em adaptively}, without knowledge of $\beta,\lambda$.
	To contrast, even with parameter knowledge, upper bounds were only previously known for limited regimes $\beta\leq 1$ and $\beta=2$ \citep{slivkins2014contextual,krishnamurthy,manegueu2021,jia2023}.
	thus, our work resolves open questions raised by disparate threads of the literature.

	We also study the problem of attaining faster gap-dependent regret rates in non-stationary bandits.
	While such rates are long known to be impossible in general \citep{garivier2011}, we show that environments admitting a {\em safe arm} \cite{suk22} allow for much faster rates than the worst-case scaling with $\sqrt{t}$.
	While previous works in this direction focused on attaining the usual logarithmic regret bounds, as summed over stationary periods, our new gap-dependent rates reveal new optimistic regimes of non-stationarity where even the logarithmic bounds are pessimistic.
	We show our new gap-dependent rate is tight and that its achievability (i.e., as made possible by a safe arm) has a surprisingly simple and clean characterization within the smooth H\"{o}lder class model.
\end{abstract}

\section{Introduction}\label{sec:intro}

In multi-armed bandits (MAB), an agent sequentially chooses actions, from a set of $K$ {\em arms}, based on partial and uncertain feedback in the form of (bounded) rewards $Y_t(a)$ for past actions $a \in [K]$ (see \cite{bubeck2012a,slivkinsbook,lattimore} for general surveys).
The goal is to maximize the cumulative reward.

We consider the non-stationary {\em smooth} variant of the problem 
where only mild H\"{o}lder class assumptions are made on changes in rewards over time. 
In fact, this model captures any finite-horizon bandit problem (e.g., via a polynomial interpolation).
Additionally, the degree of smoothness (as measured by the H\"{o}lder exponent or coefficient of the associated H\"{o}lder class)
can be considered a more fine-grained measure of non-stationarity in comparison to conventional measures appearing in other works on non-stationary MAB.
Indeed, the rates in this model smoothly interpolate between the more parametric $\sqrt{LT}$ rates seen in {\em switching bandits} \citep{garivier2011}, with $L$ switches in rewards over horizon $T$, and the $V^{1/3} T^{2/3}$ rates in terms of the {\em total variation} measure $V$ quantifying magnitude of total changes in rewards \citep{besbes2014}.

The smooth model has been previously studied in bits and pieces.
Most previous works \citep{slivkins2014contextual,wei-srivastava,komiyama21,krishnamurthy} focused on the case of non-stationary rewards which are Lipschitz in time, which is also called {\em slowly varying} bandits. 
Recently, \cite{manegueu2021} studied the more general H\"{o}lder continuous rewards with H\"{o}lder exponent $\beta \leq 1$ (i.e., the non-differentiable regime), while \cite{jia2023} studied differentiable H\"{o}lder reward functions.

The known (dynamic\footnote{as measured to a time-varying sequence of best arms.}) regret upper bounds are scant in these works (see \Cref{table}), even when assuming knowledge of the smoothness.
Even more challenging, it remained open whether one could achieve adaptive regret upper bounds without knowing the smoothness.
This work resolves these questions and thus unifies these disparate threads in the literature.

\rev{
Our result is also somewhat surprising since prior approaches \citep{krishnamurthy,manegueu2021,jia2023} all relied on confidence bounds on the magnitude of change in rewards, which arises in estimating the bias in estimating rewards due to non-stationarity.
The design of such confidence bounds necessarily requires knowledge of the smoothness.
To contrast, the \meta algorithm randomly schedules periods of fresh exploration, many of which coincide with periods where such bias is minimal, whereas such exploration is deterministically scheduled in the aforementioned prior works, and thus must take into account the bias in estimation.
}
In non-parametric statistics, it's well known that it's impossible to design confidence intervals adaptive to unknown smoothness \citep{low97}, ruling out approaches of this kind. 

\subsection{Further Discussion on Related Works}\label{subsec:related}

\paragraph*{Smooth Non-Stationary Bandits.}
To our knowledge, \cite{slivkins2014contextual} is the first work to study the slowly varying (i.e., Lipschitz rewards in time) bandit problem.
Given a bound $\delta$ on the drift in rewards between rounds, their Corollary 13 attains $\delta^{1/3} \cdot T$ dynamic regret via a reduction to Lipschitz contextual bandits with deterministic context $X_t \doteq t$.
Other works also studied the slowly varying setting getting $\delta^{1/3} \cdot T$ or $\delta^{1/4} \cdot T$ regret \citep{combes14,levine2017,wei-srivastava,seznec2019,trovo20,komiyama21,ghosh22}.
Some of the mentioned works only used the drift parameter $\delta$ as a measure of non-stationarity within more structured bandit problems.
Importantly, all of the above works' procedures rely on knowledge of $\delta$.
Recently, \cite{krishnamurthy} showed the $\delta^{1/3} \cdot T$ rate is minimax for the class of slowly-varying problems with drift parameter $\delta$. 

\cite{manegueu2021} studied a more general H\"{o}lder continuous model where rewards-in-time have H\"{o}lder exponent $\beta \in (0,1]$, and established a regret upper bound with a procedure which requires knowledge of $\beta$.
\cite{jia2023} is the first work to study reward functions which are differentiable in time.
They derive a dynamic regret lower bound and show matching regret upper bounds for once and twice differentiable reward functions.
Once again, all mentioned regret upper bounds crucially rely on knowledge of the smoothness.

\paragraph*{Switching and Other Non-Stationary Bandits.}
Switching bandits was first considered in the adversarial setting by \cite{auer2002nonstochastic}, where a version of EXP3 was shown to attain optimal dynamic regret $\sqrt{LT}$ when tuned with knowledge of the number $L$ of switches.
Later works showed similar guarantees in this problem for procedures inspired by stochastic bandit algorithms \citep{kocsis2006,yu2009,garivier2011,mellor13,liu2018,cao2019}.
Recently, \cite{auer2018,auer2019,chen2019} established the first adaptive and optimal dynamic regret guarantees, without requiring knowledge of $L$.
Other non-stationarity measures, such as the aforementioned total variation, or more nuanced counts than $L$ were studied \citep{suk22,abbasi22}.

\paragraph*{Online Learning with Drift.}
There's also a related thread of works on online learning with drift where the $\delta^{1/3}$ rate appears \citep{helmbold1991tracking,bartlett1992learning,helmbold1994tracking,barve1997,long1998complexity,mohri2012new,hanneke19,mazzetto2023}.

\paragraph*{Non-parametric Contextual Bandits.}
H\"{o}lder class assumptions appear broadly in non-parametric statistics \citep{gyorfi-book,tsybakov}.
In particular, H\"{o}lder smooth models also naturally appear in the contextual bandit problem \citep{woodroofe1979one,sarkar1991one,yang2002randomized,lu2009showing,rigollet-zeevi,perchet-rigollet,slivkins2014contextual,qian16a,qian16b,reeve,guan-jiang,gur-momeni-wager,krishnamurthy19a,hu20,arya20,suk21,cai22,suk23,blanchard23}.
As mentioned earlier, the smooth non-stationary bandit is in fact a special case of the (stationary) smooth contextual bandit problem when taking the context $X_t \doteq t$.

Interestingly, \cite{gur-momeni-wager} show that one cannot in general rate-optimally adapt to unknown smoothness for this problem.
As such, adaptive guarantees for this setting are typically made using a self-similarity assumption \citep{qian16b,gur-momeni-wager,cai22}.
However, these results concern random i.i.d. contexts.
To contrast, our results for the $X_t\doteq t$ case are fully adaptive to smoothness without requiring self-similarity.

\subsection{Contributions}\label{subsec:contributions}

Our contributions are as follows:

\begin{enumerate}
	\item We show a dynamic regret lower bound for all H\"{o}lder classes of reward functions. 
	New in this work, we give a sharp characterization of the optimal dependence on the number of arms $K$ and H\"{o}lder coefficient $\lambda$, which is not considered in the lower bounds of prior works \citep{krishnamurthy,jia2023}.
	\item We next show the \meta algorithm of \cite{suk22}, which attains a dynamic regret bound in terms of so-called {\em significant switches in best arm}, in fact attains the optimal regret for all H\"{o}lder classes without any parameter knowledge.
	\item As a secondary contribution, we study
	 gap-dependent rates for non-stationary bandits.
	For environments with no significant switch, we propose a new gap-dependent rate based on the idea of a {\em significant shift oracle} which plays arms until they incur large dynamic regret.
	We show that this gap-dependent rate recovers a more pessimistic {\em restarting oracle} gap-dependent rate targeted by prior related works \citep{mukherjee2019,seznec2020,krishnamurthy}. 
	We show our new rate is achievable without any parameter knowledge by a randomized elimination algorithm inspired by \cite{suk22}.
	Importantly, this shows that, so long as no significant shift occurs, one can achieve much faster gap-dependent rates than previously thought possible.
	\item Relating this back to the smooth non-stationary bandit, we give a simple and sharp characterization, in terms of the maximum H\"{o}lder coefficient, of which smooth bandit models admit these fast gap-dependent regret rates.
\end{enumerate}

\vspace{-2em}
\begin{table}[h]
	\[
	\begin{array}{|c|c|c|}
		\hline
		\text{Parameters Studied in Prior Works} & \text{Adaptive?}  & \text{Dynamic Regret Upper Bound}\\
		\hline
		\text{ $\beta \in (0,1]$ \citep{manegueu2021}} & \text{{\color{red} No}} & T^{\frac{\beta+1}{2\beta+1}} \lambda^{\frac{1}{2\beta+1}} K^{\frac{\beta}{2\beta+1}}  \\
		\hline
		\text{ \makecell{ $\beta=1$ \citep{slivkins2014contextual,krishnamurthy} } } & \text{{\color{red} No}} & T^{\frac{2}{3}} \lambda^{\frac{1}{3}} K^{\frac{1}{3}} \\
		\hline
		\text{ $\beta =1,2$, $K=2$ \citep{jia2023}} & \text{\color{red} No} & T^{\frac{\beta+1}{2\beta+1}} \lambda^{\frac{1}{2\beta+1}} \\
		\hline
		\text{\makecell{ $\beta > 0$ (this work) \\ (matching upper \& lower bounds)} } & \text{{\color{ForestGreen} Yes}} & T^{\frac{\beta+1}{2\cdot \beta + 1}}  \lambda^{\frac{1}{2\cdot \beta + 1}}  K^{\frac{\beta}{2 \cdot \beta + 1}} \\
		\hline
	\end{array}
	\]
	\caption{
	A summary comparison of our dynamic regret bounds with those of prior works.
	}\label{table}
\end{table}

\section{Problem Setup}\label{sec:setup}

\subsection{Preliminaries and Notation}

We assume an oblivious adversary decides a sequence of distributions on the rewards of $K$ arms in $[K]$.

Arm $a$ at round $t$ has random reward $Y_t(a) \in [0,1]$ with mean $\mu_t(a)$.
A (possibly randomized) algorithm $\pi$ selects at each round $t$ some arm $\pi_t \in [K]$ and observes reward $Y_t(\pi_t)$. The goal is to minimize the \rev{cumulative or total} {\em dynamic regret}, i.e., the expected regret to the best arm at each round. This is defined as 
\[
	R(\pi,T) \doteq \sum_{t=1}^T \max_{a\in [K]} \mu_t(a) - \mathbb{E}\left[ \sum_{t=1}^T \mu_t(\pi_t)\right].
\]
we will use $R_{\mc{E}}(\pi,T)$ to denote the expected regret under an environment $\mc{E}$.

In this paper, we rely heavily on analyzing the gaps in mean rewards between arms.
Thus, let $\delta_t(a',a) \doteq \mu_t(a') - \mu_t(a)$ denote the {\em relative gap} of arms $a$ to $a'$ at round $t$.
Define the {\em absolute gap} of arm $a$ as $\delta_t(a) \doteq \max_{a'\in[K]} \delta_t(a',a)$, corresponding to the instantaneous dynamic regret of playing $a$ at round $t$.
Then, the dynamic regret can be written as $\sum_{t\in [T]} \mb{E}[\delta_t(\pi_t)]$.

\begin{note*}
Throughout this paper, in theorem statements we will use $C_0,C_1,\ldots$ to denote universal constants free of $K,T,\beta,\lambda,\{\mu_t(a)\}_{t\in [T],a\in [K]}$.
In proofs, universal constants $c_0,c_1,\ldots$ will be used.
\end{note*}

\subsection{Smooth Non-Stationary Bandits}

We first recall the definition of a H\"{o}lder class of functions \cite[Definition 1.2]{tsybakov}.

\begin{definition}[H\"{o}lder Class Function]\label{defn:holder}
	For $\beta, \lambda > 0$, we say a function $f:[0,1]\to\mb{R}$ is $(\beta,\lambda)$-H\"{o}lder if $f$ is $m \doteq \floor{\beta}$-times differentiable and
	\[
		\forall x, x' \in [0,1]: |f^{(m)}(x) - f^{(m)}(x')| \leq \lambda \cdot |x-x'|^{\beta - m}.
	\]
	By convention, we let the zero-th derivative be $f^{(0)}(x) \doteq f(x)$.
	We call $\lambda$ the {\bf H\"{o}lder coefficient} whose value may be taken as $\sup_{x\neq x'} \frac{|f^{(m)}(x) - f^{(m)}(x')|}{|x-x'|^{\beta - m}}$.
\end{definition}

Next, we say a bandit environment is H\"{o}lder class if the absolute gaps, as functions of normalized time, are $(\beta,\lambda)$-H\"{o}lder in the sense above.

\begin{definition}[H\"{o}lder Gap Environments]\label{defn:holder-environment}
	We say a bandit environment is $(\beta,\lambda)$-H\"{o}lder if, for every arm $a \in [K]$, there exists a $(\beta,\lambda)$-H\"{o}lder function $f$ such that the gap function (in time) is realized by $f$, i.e. $\delta_t(a) = f(t/T)$ for all $t \in [T]$.
	we will use $\Sigma(\beta,\lambda)$ to denote the class of bandit environments which are $(\beta,\lambda)$-H\"{o}lder over $T$ rounds.
\end{definition}

We note that, unlike in the aforementioned prior works on smooth non-stationary bandits \citep{slivkins2014contextual,krishnamurthy,manegueu2021,jia2023}, our model only relies on characterizing the smoothness of the absolute gap functions $\delta_t(a)$ in time $t$, and not on the reward functions $\mu_t(a)$.
In particular, changes in rewards {\bf can be arbitrarily rough} and changes in rewards $\mu_t(a)$ which do not change the gaps $\delta_t(a)$ do not enter into our regret rates.

\section{Dynamic Regret Lower Bound}

We first characterize the minimax regret rate over the class of problems in $\Sigma(\beta,\lambda)$.
For comparison, \cite[Theorem 3.4]{jia2023} already established a lower bound for integer smoothness $\beta \in \mb{Z}_{\geq 1}$, $K=2$ arms, and fixed H\"{o}lder coefficient $\lambda =1$.
Our main novelty here is to show a more comprehensive lower bound which captures sharp dependence on all of $T,K,\lambda$.

\begin{theorem}{(Proof in \Cref{app:lower})}\label{thm:lower-bound}
	Fix $\beta, \lambda > 0$, $K \geq 2$, and $T \in \mb{N}$.
	For any algorithm $\pi$, there exists an environment $\mc{E} \in \Sigma(\beta,\lambda)$ such that the regret is lower bounded by
	\[
		R_{\mc{E}}(\pi,T) \geq  \Omega( \min\{ \sqrt{KT} + T^{\frac{\beta+1}{2\cdot \beta + 1}} \cdot \lambda^{\frac{1}{2\cdot \beta + 1}} \cdot K^{\frac{\beta}{2 \cdot \beta + 1}} , T\} ).
	\]
\end{theorem}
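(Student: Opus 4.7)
The $\min\{\cdot,T\}$ is just the trivial regret cap; within it, I would lower-bound the two additive pieces by separate constructions and combine them using $\max(a,b)\geq (a+b)/2$. The $\sqrt{KT}$ piece is the classical stationary $K$-arm bandit minimax lower bound, obtained by boosting a uniformly random fixed arm by $\Theta(\sqrt{K/T})$ throughout $[T]$; the resulting gap functions are constant in time, hence trivially $(\beta,\lambda)$-H\"older for every $\beta,\lambda>0$.

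For the non-stationary piece, I would partition $[T]$ into $M$ equal blocks of width $w = T/M$ with centers $t_i$, fix a $C^\infty$ bump $\phi\colon\mathbb{R}\to[0,1]$ compactly supported in $[-1/2,1/2]$ with $\phi(0)=1$ and uniformly bounded derivatives through order $\lceil\beta\rceil$, and independently sample a ``best arm'' $a_i^\star\in[K]$ uniformly per block. Setting
\[
  \mu_t(a) \;=\; \tfrac{1}{2} + \Delta\,\phi\!\left(\tfrac{t-t_i}{w}\right)\mathbf{1}\{a=a_i^\star\} \qquad \text{for $t$ in block $i$,}
\]
each gap function $\delta_\cdot(a)$ is a disjoint sum (over $i$ with $a_i^\star\neq a$) of bumps of height $\Delta$ and width $w$, with all derivatives vanishing at block boundaries. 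A chain-rule computation using the H\"older regularity of $\phi^{(\lfloor\beta\rfloor)}$ shows that, viewed as a function of normalized time $t/T$, each such gap function is $(\beta,\,C_\phi\Delta M^\beta)$-H\"older; hence the environment lies in $\Sigma(\beta,\lambda)$ as soon as $\Delta\leq c_\phi\lambda M^{-\beta}$.

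Since the $a_i^\star$'s are independent across blocks with disjoint bump supports, within each block the algorithm faces a fresh $K$-arm Bernoulli instance in which one unknown arm is boosted by $\Delta\phi$. A standard KL/Pinsker argument---localized to the constant-fraction plateau $\{\phi\geq 1/2\}$ of length $\Theta(w)$---gives per-block expected regret $\Omega(\min(\Delta w,\,K/\Delta))$, hence total regret $\Omega(\min(\Delta T,\,MK/\Delta))$. Saturating $\Delta=c_\phi\lambda M^{-\beta}$ and balancing the two inner terms yields $M\asymp(\lambda^2 T/K)^{1/(2\beta+1)}$, and substitution produces the claimed rate $\Omega(T^{(\beta+1)/(2\beta+1)}\lambda^{1/(2\beta+1)}K^{\beta/(2\beta+1)})$. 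Regimes where the optimal $M$ leaves $[1,T]$, or where $\Delta>1/2$, simply revert to the stationary $\sqrt{KT}$ bound or the trivial $T$ cap, giving the final $\min\{\sqrt{KT}+\cdots,\,T\}$ form.

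\textbf{Main obstacle.} The delicate step I expect is making the per-block reduction rigorous: one must show that rounds outside the bump's support are informationally vacuous between the $K$ candidates for $a_i^\star$ (and incur no regret), so the per-block problem cleanly reduces to a stationary $K$-arm instance with $\Theta(w)$ effective samples at gap $\Theta(\Delta)$, at which point the textbook two-point/KL argument applies verbatim to yield $\Omega(\min(\Delta w,K/\Delta))$; checking that the $M$ blocks remain mutually independent in the algorithm's view (so per-block lower bounds sum) is straightforward but must be spelled out.
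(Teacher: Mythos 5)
Your plan matches the paper's construction essentially step for step: same partition into $M \asymp (\lambda^2 T/K)^{1/(2\beta+1)}$ blocks, same $C^\infty$ bump planted on a uniformly random arm per block with per-block gap $\Delta \asymp \lambda M^{-\beta}$, same verification of the H\"older constraint via $\Delta M^\beta \lesssim \lambda$, same Le Cam/Bretagnolle--Huber two-point argument restricted to the constant-fraction plateau of each bump, and the same reversion to $\sqrt{KT}$ or the trivial $T$ cap in the degenerate regimes. One small sharpening of your ``main obstacle'': the paper does not (and cannot) argue that the $M$ blocks are ``mutually independent in the algorithm's view''---the policy in block $i$ may depend arbitrarily on earlier observations---rather it conditions on $\mathcal{H}_{iM}$ at each block boundary, picks the low-conditional-pull-count alternative arm for that block, applies the two-point/KL bound to the conditional regret, and sums; this is the precise version of the step you correctly flag as delicate, and the ``informationally vacuous outside the bump'' point is subsumed by the standard KL chain rule, which only charges rounds where the two candidate environments differ.
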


Note that if the gap functions $t \mapsto \delta_t(\cdot)$ are $C^{\infty}$ smooth in time, the above rate of $T^{\frac{\beta+1}{2 \beta +1}}\cdot \lambda^{\frac{1}{2 \beta +1}}$ for $(\beta,\lambda)$-H\"{o}lder gaps becomes $T^{1/2}$ as $\beta\to\infty$.
Thus, the rate of \Cref{thm:lower-bound} interpolates the stationary regret rate $\sqrt{T}$ and the $T^{2/3}$ regret seen in slowly-varying $\beta=1$ bandits \citep{krishnamurthy}.

\begin{remark}
	For $\beta=1$ (i.e., the slowly-varying setting), the above rate becomes
	\[
		T^{2/3} \lambda^{1/3} K^{1/3} = T \cdot (\lambda/T)^{1/3} \cdot K^{1/3},
	\]
	which is the rate seen in \cite{slivkins2014contextual} for drift parameter $\delta = \lambda/T$.
\end{remark}


\section{Dynamic Regret Upper Bound}\label{sec:upper}

As alluded in \Cref{subsec:contributions}, our main dynamic regret upper bound is achieved by the \meta (\bld{M}eta-\bld{E}limination while \bld{T}tracking \bld{A}arms) algorithm (given here as \Cref{meta-alg} of \cite{suk22} which adapts to so-called {\em significant shifts in best arm}.
The key idea behind this result is that a significant shift encodes large variation with respect to any $(\beta,\lambda)$-H\"{o}lder environment, thus allowing us to recover the rate of \Cref{thm:lower-bound}.
We now recall the notion of a significant shift.

First, we say arm $a$ incurs \bld{significant regret}\footnote{Our definition is slightly different from that of \cite{suk22}; all mentioned results hold for either notions.} on interval\footnote{From here on, we will conflate the intervals $[a,b],[a,b)$ for $a,b\in\mb{N}$ with the naturals contained within.} $[s_1,s_2]$ if
\begin{equation}\label{eq:bad-arm}
	\sum_{t = s_1}^{s_2} \delta_{t}(a) \geq \sqrt{K\cdot (s_2-s_1+1)}, 
\end{equation}
or intuitively if it incurs large dynamic regret.
On the other hand, if \eqref{eq:bad-arm} holds for no interval in a \rev{time window}, then arm $a$ incurs little regret over that period and is {\em safe} to play.
Thus, a \emph{significant shift} is recorded only when there is no safe arm left to play.
\rev{Equivalently, this occurs when every arm $a \in [K]$ must satisfy \Cref{eq:bad-arm} on some interval $[s_1,s_2]$ of arms.
Importantly, these notions are independent of the magnitude or smoothness of non-stationarity, or even whether changes in best-arm have occurred.
}
The following recursive definition captures this.

\begin{definition}\label{defn:sig-shift}
	Let $\tau_0=1$. Then, {recursively for $i \geq 0$}, the $(i+1)$-\rev{st} {\bf significant shift} {is recorded at time} $\tau_{i+1}$, {which denotes} the earliest time $\tau \in (\tau_i, T]$ such that for every arm $a\in [K]$, there exist rounds $s_1<s_2, [s_1,s_2] \subseteq [\tau_i,\tau]$, such that {arm} $a$ incurs significant regret \eqref{eq:bad-arm} on $[s_1,s_2]$, \rev{or else we let $\tau_{i+1} \doteq T+1$ if no such round exists}.

	{We will refer to the intervals $[\tau_i, \tau_{i+1}), i\geq 0,$ as {\bf significant phases}. The unknown number of such phases (by time $T$) is denoted $\Lsig +1$, whereby $[\tau_\Lsig, \tau_{\Lsig +1})$, for $\tau_{\Lsig +1} \doteq T+1,$ denotes the last phase.}
\end{definition}

\begin{figure}[h]
	\centering
	\includegraphics[scale=0.75]{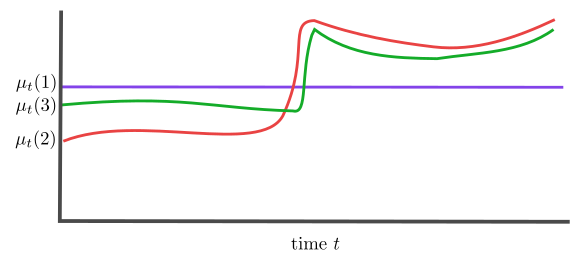}
	\caption{\rev{An example of a non-stationary safe environment where no significant shift occurs because arm $3$ is safe throughout, maintaining small dynamic regret, even while being suboptimal at all times.}
	}
\end{figure}

Then, a {\em significant shift oracle}, which roughly plays arms until they are unsafe in each significant phase and then restarts at each significant shift, attains a regret bound of \citet[Proposition 1]{suk22}.
\begin{equation}\label{eq:regret-bound-sig}
	\sum_{i=0}^{\Lsig} \sqrt{K \cdot (\tau_{i+1} - \tau_i)}.
\end{equation}
The main result of \cite{suk22} is to match the above rate up to log terms
without any knowledge of non-stationarity.
Their \meta algorithm estimates when the significant shifts $\tau_i$ occur using importance-weighted estimates of the gaps and then restarts an elimination procedure upon detecting an empirical version of a significant shift, \rev{based on importance-weighted estimates $\hat{\delta}_t(a)$ of the gaps $\delta_t(a)$.}

The inner workings of the algorithm are beyond the scope of this discussion and surprisingly irrelevant for our result here, \rev{as our regret upper bound holds in a blackbox manner for any algorithm acheiving the optimal rate \Cref{eq:regret-bound-sig} in terms of significant phases}.
Our main result is that, independent of the algorithm and for any environment, the regret rate \Cref{eq:regret-bound-sig} inherently captures the minimax rate for smooth non-stationary bandits.
\rev{
Intuitively, this is because a significant phase $[\tau_i,\tau_{i+1})$ is in fact long enough to bound the number of significant shifts $\Lsig$ by $T^{\frac{1}{2\beta+1}} \lambda^{\frac{2}{2\beta+1}} K^{-\frac{1}{2\beta+1}}$ which, when plugged into the optimal rate of $\sqrt{\Lsig KT}$ yields the optimal rate for smooth problems as seen in \Cref{thm:lower-bound}.
}

\begin{theorem}[Proof in \Cref{app:upper}]\label{thm:upper-smooth-all}
	Consider any $(\beta,\lambda)$-H\"{o}lder environment over $T$ rounds and let $\{\tau_i\}_{i=0}^{\Lsig}$ be the significant shifts of the environment as in \Cref{defn:sig-shift}.
	Then, we have
	\[
		\sum_{i=0}^{\Lsig} \sqrt{K \cdot (\tau_{i+1} - \tau_i)} \leq C_0 \sqrt{\beta+1} \left( \sqrt{KT} + T^{\frac{\beta+1}{2 \beta + 1}} \cdot \lambda^{\frac{1}{2 \beta + 1}} \cdot K^{\frac{\beta}{2 \beta + 1}} \right).
	\]
\end{theorem}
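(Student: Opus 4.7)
The plan is to reduce to bounding the number of phases $\Lsig+1$ via Cauchy--Schwarz, then to use the smoothness hypothesis to lower-bound the length of each significant phase. Writing $\Delta_i := \tau_{i+1}-\tau_i$, Cauchy--Schwarz together with $\sum_i \Delta_i \leq T$ gives $\sum_i \sqrt{K\Delta_i} \leq \sqrt{(\Lsig+1)KT} \leq \sqrt{KT} + \sqrt{\Lsig \cdot KT}$, so the first term on the RHS of the statement is free, and it suffices to show $\Lsig \lesssim T^{1/(2\beta+1)}\lambda^{2/(2\beta+1)}K^{-1/(2\beta+1)}$ up to a $\beta$-dependent factor. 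This will follow from the claim that every significant phase has length at least $D^\star := c \cdot (KT^{2\beta}/\lambda^2)^{1/(2\beta+1)}$, which gives $\Lsig+1 \leq T/D^\star$ and the desired expression after multiplying by $\sqrt{KT}$.

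The main step is this phase-length lower bound, which combines two observations. First, by \Cref{defn:sig-shift}, at time $\tau_{i+1}$ every arm $a$ has incurred significant regret on some subinterval of the phase, and averaging \eqref{eq:bad-arm} shows $\max_{t\in[\tau_i,\tau_{i+1})} \delta_t(a) \geq \sqrt{K/\Delta_i}$ for every arm. Second, since at each round the optimal arm has zero gap, a pigeonhole over the $\Delta_i$ zero-gap pairs yields some arm $a^{\star\star}$ with $\delta_t(a^{\star\star}) = 0$ at at least $\Delta_i/K$ rounds of the phase. Writing $\delta_t(a^{\star\star}) = f(t/T)$ for the underlying $(\beta,\lambda)$-H\"{o}lder function $f \geq 0$, this gives $f$ at least $\Delta_i/K$ zeros in the phase's normalized-time interval $I$ of length $\ell := \Delta_i/T$; because $f \geq 0$, each zero is a local minimum, so (when $m := \lfloor\beta\rfloor \geq 1$) $f'$ also vanishes at each one. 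Applying Rolle's theorem iteratively, $f^{(k)}$ has at least one zero $z_k \in I$ for every $k \leq m$, provided $\Delta_i/K$ exceeds roughly $m$. The H\"{o}lder bound at $z_m$ gives $|f^{(m)}(x)| \leq \lambda \ell^{\beta-m}$ on $I$, and integrating backward from the successive zeros $z_{m-1},\dots,z_0$ propagates this to $|f(x)| \leq \lambda \ell^\beta$ uniformly on $I$. Combining with $\max_t \delta_t(a^{\star\star}) \geq \sqrt{K/\Delta_i}$ yields $\sqrt{K/\Delta_i} \leq \lambda(\Delta_i/T)^\beta$, equivalently $\Delta_i \geq D^\star$.

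The main obstacle is the short-phase regime $\Delta_i < Km$, in which the iterated Rolle argument runs out of zeros before reaching $f^{(m)}$. Such phases can contribute at most $K\sqrt{m+1}$ each to the target sum, and I expect the $\sqrt{\beta+1}$ prefactor in the statement to arise from a case split controlling their net contribution: either $D^\star \gtrsim Km$, so short phases cannot actually satisfy \Cref{defn:sig-shift} and the argument above suffices; or else the parameters force $T \lesssim K m^{1+O(1/\beta)}\lambda^{1/\beta}$, in which case the second term of the RHS (already carrying the $\sqrt{\beta+1}$ factor) exceeds the trivial bound $\sum_i \sqrt{K\Delta_i} \leq T$ obtained from $\Lsig+1 \leq T/K$. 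Gluing the two ranges gives the stated bound.
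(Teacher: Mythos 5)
The high-level structure---lower-bound every significant phase length by $D^\star := c\,(K T^{2\beta}/\lambda^2)^{1/(2\beta+1)}$ via a Rolle/Taylor argument, convert this to $\Lsig \lesssim T/D^\star$, then apply Cauchy--Schwarz---is the same as the paper's (which uses Jensen, an equivalent step). The long-phase argument is also essentially the paper's: pigeonhole for zeros, Rolle to locate a zero of $f^{(m)}$, the H\"{o}lder bound at that zero, and a Taylor expansion backward from it. One remark you make along the way is incorrect, though fixable: you claim that since $f \geq 0$ on the phase interval, every zero is a local minimum so $f'$ also vanishes there. But the interpolating function $f$ in \Cref{defn:holder-environment} only has to agree with $\delta_t(a)$ at the grid points $t/T$; off the grid $f$ may be negative, so zeros of $f$ need not be local minima. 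The standard Rolle-between-consecutive-zeros argument, which is what the paper uses, already gives the needed critical point of $f^{(m)}$ from $m+1$ zeros, so this detour is unnecessary.

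The genuine gap is the short-phase regime $\Delta_i < Km$, and your attempted case split does not close it. Your first branch ("$D^\star \gtrsim Km$, so short phases cannot actually satisfy \Cref{defn:sig-shift}") is circular: the inequality $\Delta_i \geq D^\star$ was derived only for phases already long enough to supply Rolle with $m+1$ zeros (i.e.\ $\Delta_i \gtrsim Km$), so it cannot be used to rule out shorter phases. \Cref{fact:at-least-K} only guarantees $\Delta_i \geq K$; nothing about $D^\star$ or the smoothness parameters precludes a run of many consecutive phases of length on the order of $K$, even when $D^\star \gg Km$. Such a run inflates $\Lsig$ well past $T/D^\star$ and invalidates the Cauchy--Schwarz bound. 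The paper closes exactly this hole by \emph{concatenating} consecutive short phases into pseudo phases of length at least $(m+1)K$: pigeonhole and Rolle then apply to the concatenated interval (using that \Cref{fact:large-enough} still furnishes a round with a large gap for every arm within the concatenation, and each pseudo phase contains at most $m+1$ short phases since each is length $\geq K$), giving a lower bound of $D^\star$ on the pseudo phase length and hence a bound on the number of pseudo phases; a further extension-to-neighboring-long-phase step handles the remaining unfilled pseudo phases. Your second branch ($D^\star < Km$, dominated by the trivial bound $T$) is a correct observation and mirrors part of the $\sqrt{\beta+1}$ bookkeeping, but the regime where $D^\star \geq Km$ and short phases are nevertheless present is left unhandled, and that is where the main technical work of the paper's proof lies.
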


An immediate corollary is that the \meta algorithm can match the lower bound of \Cref{thm:lower-bound} up to log terms.

\begin{corollary}\label{cor:upper}
	By Theorem 1 of \cite{suk22}, the \meta algorithm has an expected regret upper bound
	\[
		R(\pi,T) \leq C_1 \log(K) \log^2(T) \sqrt{\beta+1} \left( \sqrt{KT} + T^{\frac{\beta+1}{2 \beta + 1}} \cdot \lambda^{\frac{1}{2 \beta + 1}} \cdot K^{\frac{\beta}{2 \beta + 1}} \right).
	\]
\end{corollary}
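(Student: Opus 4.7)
The plan is to lower-bound the length $\Delta_i := \tau_{i+1}-\tau_i$ of each significant phase (except possibly the last) and then conclude via Cauchy--Schwarz, along the lines of the intuition given in the text just before the theorem.

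First, I fix a significant phase $[\tau_i,\tau_{i+1})$ with $\tau_{i+1}\leq T$. By \Cref{defn:sig-shift}, every arm $a\in[K]$ must have a sub-interval $[s_1^a,s_2^a]\subseteq[\tau_i,\tau_{i+1}]$ on which it incurred significant regret. Since $\min_a \delta_t(a)=0$ at every round, there is some arm $a^*$ optimal at some time $t^*\in[\tau_i,\tau_{i+1})$, and its H\"older gap function $f$ satisfies $f(t^*/T)=0$ with $f\geq 0$ on $[0,1]$. The key technical step is to establish the pointwise bound
\[
\delta_t(a^*) \;\leq\; c_\beta\,\lambda\,(|t-t^*|/T)^\beta \;\leq\; c_\beta\,\lambda\,(\Delta_i/T)^\beta
\]
for all $t$ in the phase, with a factor $c_\beta$ depending only on $\beta$. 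For $\beta\leq 1$ this is immediate from the H\"older definition. For $\beta>1$, since $t^*/T$ is a global minimum, $f'(t^*/T)=0$, and I would combine this with Taylor's theorem with integral H\"older remainder, iteratively integrating the bound $|f^{(m)}(x)-f^{(m)}(t^*/T)| \leq \lambda|x-t^*/T|^{\beta-m}$ up $m=\lfloor\beta\rfloor$ times. Any non-vanishing higher-order Taylor coefficients of $f$ at $t^*/T$ would be controlled using the range constraint $\delta\in[0,1]$ inherited from the bandit setting.

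With the pointwise bound in hand, the significant shift condition $\sum_{t=s_1^{a^*}}^{s_2^{a^*}}\delta_t(a^*)\geq\sqrt{K\ell_{a^*}}$ combined with the upper estimate $\ell_{a^*}\cdot c_\beta\lambda(\Delta_i/T)^\beta$ yields $\ell_{a^*}\geq K T^{2\beta}/(c_\beta^2\lambda^2\Delta_i^{2\beta})$. Using $\ell_{a^*}\leq\Delta_i$ and rearranging gives the phase lower bound $\Delta_i\geq c_\beta^{-2/(2\beta+1)}(K/\lambda^2)^{1/(2\beta+1)}T^{2\beta/(2\beta+1)}$. Since $\sum_i\Delta_i\leq T$, this implies $\Lsig+1\lesssim 1+c_\beta^{2/(2\beta+1)}(\lambda^2 T/K)^{1/(2\beta+1)}$, and Cauchy--Schwarz gives
\[
\sum_{i=0}^{\Lsig}\sqrt{K\Delta_i}\;\leq\;\sqrt{K T(\Lsig+1)}\;\lesssim\;\sqrt{KT}+\sqrt{\beta+1}\cdot T^{(\beta+1)/(2\beta+1)}K^{\beta/(2\beta+1)}\lambda^{1/(2\beta+1)},
\]
which matches the form of the stated bound.

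The hard part will be the pointwise H\"older step for $\beta>1$: the naive inequality $|f(x)-f(x_0)|\leq\lambda|x-x_0|^\beta$ is valid only for $\beta\leq 1$, and for larger $\beta$ the Taylor polynomial of $f$ at $t^*/T$ can pick up contributions from non-vanishing higher-order derivatives that are not directly scaled by $\lambda$. Overcoming this requires carefully leveraging $f\geq 0$ (which forces $f'(t^*/T)=0$ at interior minima and constrains the sign of higher derivatives) together with the boundedness $f\leq 1$ (to control the remaining Taylor coefficients via Markov--Bernstein-type estimates), and I expect the $\sqrt{\beta+1}$ factor in the theorem to be precisely the slack absorbed during this integration and polynomial-control step.
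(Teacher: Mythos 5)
Your overall scaffolding is right (lower-bound the phase lengths, then conclude via Cauchy--Schwarz), and you correctly identify that the crux is a pointwise bound of the form $\delta_t(a^*)\lesssim\lambda(\Delta_i/T)^\beta$ within a phase. You also correctly flag the pointwise step as ``the hard part'' for $\beta>1$ --- but your proposed route through it has a genuine gap, and your proposal stops short of a proof precisely there.

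The gap is that knowing $t^*/T$ is an interior minimum only kills the \emph{first} derivative: $f'(t^*/T)=0$. The H\"older definition only controls $|f^{(m)}(x)-f^{(m)}(x')|$ for $m=\lfloor\beta\rfloor$; it says nothing about the size of $f^{(m)}(t^*/T)$, $f^{(m-1)}(t^*/T)$, etc.\ themselves, and $f\in[0,1]$ does not imply useful bounds on intermediate derivatives of a general (non-polynomial) $(\beta,\lambda)$-H\"older function on a short window --- Markov--Bernstein estimates are for polynomials and do not transfer. So with a single root you have no handle on the order-$\geq 2$ Taylor coefficients. The paper's proof of \Cref{thm:upper-smooth-all} (\Cref{app:upper}) circumvents this by \emph{not} relying on a single minimum: it uses the pigeonhole principle to find an arm that is optimal at $m+1$ distinct rounds inside the phase (so $F$ has $m+1$ roots), then applies Rolle's theorem $m$ times to locate a point $x_0$ with $F^{(m)}(x_0)=0$. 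At that point the H\"older condition bounds $\sup|F^{(m)}|$, and the bounds propagate down to $F^{(k)}$ for $k<m$ via the Mean Value Theorem (\Cref{eq:derivative-bound-k}), giving the Taylor bound $|F(x)|\leq(e-1)\lambda(\Delta_i/T)^\beta$. Your proposal never finds the extra roots that make the top derivative controllable. A second omission: the pigeonhole step needs $\Delta_i>(m+1)K$; phases shorter than this cannot produce $m+1$ optimal rounds for one arm, so the paper has a separate ``short phases $\to$ pseudo phases'' bookkeeping (\Cref{defn:pseudo-phase}) to aggregate them before applying Rolle. Your proposal does not address short phases at all, and without that the phase-length lower bound is simply false for short $\Delta_i$. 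Finally, note the corollary itself is then just Theorem 1 of \cite{suk22} combined with \Cref{thm:upper-smooth-all}; there is nothing further to prove once that theorem is in hand.
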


\begin{remark}
	Note that any non-stationary bandit environment over $T$ rounds can be captured by a $(\beta,\lambda)$-H\"{o}lder environment for any $\beta > 0$ using, e.g., a Lagrange interpolation of the finite data $\{(t/T,\mu_t(a))\}_{t\in [T], a \in [K]}$.
	As $T^{\frac{\beta+1}{2\beta+1}} \cdot \lambda^{\frac{1}{2\beta+1}} \cdot K^{\frac{\beta}{2\beta+1}} \to \sqrt{KT}$ as $\beta\to\infty$, this seems to suggest we can recover a stationary $\sqrt{KT}$ regret rate for any non-stationary environment, which is seemingly a contradiction.
	However, taking $\beta \to \infty$ will make the bound of \Cref{thm:upper-smooth-all} vacuous as there is a constant dependence of $\sqrt{\beta+1}$ on $\beta$.
	This suggests perhaps the $\sqrt{\beta}$ dependence is unavoidable, and it is curious if such a dependence can be tightened.
\end{remark}

\rev{

\section{More Details About META}\label{sec:meta}

\begin{algorithm2e}[h!] \small 
\DontPrintSemicolon
\caption{{\bld{M}eta-\bld{E}limination while \bld{T}racking \bld{A}rms (\meta)}}
\label{meta-alg}
	{\nonl \bld{Input:} horizon $T$.}\\
	  \bld{Initialize:} round count $t \leftarrow 1$.\\
	  \textbf{Episode Initialization (setting global variables {\normalfont $\hat{\tau}_{\ell},\Aglobal,B_{s,m}$})}:\\
	  \Indp $\hat{\tau}_{\ell} \leftarrow t$. \label{line:ep-start}  \tcp*{Start of the $\ell$-th episode.}
	  $\Aglobal \leftarrow [K]$ \label{line:define-end} \tcp*{Global candidate arm set.}
	  For each $m=2,4,\ldots,2^{\lceil\log(T)\rceil}$ and $s=\hat{\tau}_{\ell}+1,\ldots,T$:\\
    \Indp Sample and store $B_{s,m} \sim \text{Bernoulli}\left(\frac{1}{\sqrt{m\cdot (s - \hat{\tau}_{\ell})}}\right)$. \label{line:add-replay} \tcp*{Set replay schedule.}
        \Indm
	\Indm
	 \vspace{0.2cm}
	 Run $\base(\hat{\tau}_{\ell},T + 1 - \hat{\tau}_{\ell})$. \label{line:ongoing-base} \\
  \lIf{$t < T$}{restart from Line 2 (i.e. start a new episode).
  \label{line:restart}}
\end{algorithm2e}

 \begin{algorithm2e}[h!]\small
 \DontPrintSemicolon
 	\caption{{\base$(\tstart,m_0)$: Randomized Successive Elimination}}
 \label{base-alg}
 {\nonl \textbf{Input}: starting round $\tstart$, scheduled duration $m_0$.}\\
 \textbf{Initialize}: $t \leftarrow \tstart$, $\mc{A}_t \leftarrow [K]$. \tcp*{$t$ and $\mc{A}_t$ are global variables.}
 	  \While{$t \leq T$}{
 		  Play a random arm $a\in \mc{A}_t$ selected with probability $1/|\mc{A}_t|$. \label{line:play-base}\\
		  Let $\mc{A}_{\text{current}} \leftarrow \mc{A}_{t}$ \label{line:current-arm-set}. \tcp*{Save current candidate arm set.}
		  Increment $t \leftarrow t+1$.\\
 		  \uIf{$\exists m\text{{\normalfont\,such that }} B_{t,m}>0$}{
                 Let $m \doteq \max\{m \in \{2,4,\ldots,2^{\lceil \log(T)\rceil}\}:B_{t,m}>0\}$. \tcp*{Maximum replay length.}
                 Run $\base(t,m)$.\label{line:replay} \tcp*{Child replay interrupts parent.}
 		   }
 		 \bld{Evict bad arms:}\\
		 \Indp $\mc{A}_{t} \leftarrow \mc{A}_{\text{current}} \bs \{a\in [K]:\text{$\exists $ round $t_0 \in [\tstart,t)$ s.t. \Cref{eq:elim} holds}\}$. \label{line:evict-At-base}\\
		 $\Aglobal \leftarrow \Aglobal \bs \left\{a \in [K] : \text{$\exists$ round $t_0 \in [\hat{\tau}_{\ell}, t)$ s.t. \Cref{eq:elim} holds} \right\}$. \label{line:evict-master}\\
		 \Indm
		\bld{Restart criterion:} \lIf{$\normalfont\Aglobal=\emptyset$}{RETURN.}
		\lIf{$t > \tstart + m_0$}{RETURN.}
 	}
\end{algorithm2e}

We next give a brief, self-contained description of \meta (\Cref{meta-alg}).
We first design a base algorithm (\Cref{base-alg}) which works well in {\em safe environments}, where there's no significant change.
We'll then randomly schedule multiple instances of this base algorithm to detect unknown significant shifts.

\paragraph*{Base Algorithm: Randomized Successive Elimination.}
In such safe environments, there is a safe arm $\asharp$ which does not incur significant regret in the sense of \Cref{eq:bad-arm}.
Our base algorithm will be to learn $\asharp$ while eliminating other unsafe arms which satisfy \Cref{eq:bad-arm}.
A key idea is that, by definition, the dynamic regret $\sum_{t=1}^T \delta_t(\asharp)$ is small $O(\sqrt{T})$ meaning it suffices to minimize the regret to the safe arm $\sum_{t=1}^T \delta_t(\asharp,\pi_t)$.
This latter relative regret/gap can be tracked using the importance-weighted estimate
\begin{equation}\label{eq:estimates}
	\hat{\delta}_t(a',a) \doteq  |\mc{A}_t| \cdot (Y_t(a') \cdot \pmb{1}\{\pi_t = a'\} - Y_t(a) \cdot \pmb{1}\{\pi_t = a\}).
\end{equation}
Then, at round $t$, we will eliminate arms from an {\em active armset} $\mc{A}_t$ when an empirical analogue of \Cref{eq:bad-arm} holds using the estimates $\hat{\delta}_t(a',a)$.
In particular, arm $a$ is evicted at round $t$ if, for some fixed $C_2 > 0$, there exists round $t_0 < t$ such that
\begin{equation}\label{eq:elim}
\max_{a'\in [K]}	\sum_{s = t_0}^{t} \hat{\delta}_s(a',a) > C_2 \left( \sqrt{ K \log(T) \cdot (t - t_0 + 1)  } + K\log(T)\right).
\end{equation}
Notably, in safe environments, the safe arm $\asharp$ is not eliminated as long as estimates are suitably accurate.

\paragraph*{Meta-Elimination using Multiple Base Algorithms.}
\meta is then a hierarchical procedure scheduling multiple copies of the base algorithm $\base(\tstart,m)$ at random start times $\tstart$ and durations $m$.

We play in episodes, with each new episode or restart triggered by the detection of a significant shift.
An episode begins by playing according to an ancestor base algorithm scheduled for the remaining rounds.
Other descendant base algorithms, called {\em replays}, occasionally interrupt the ancestor and become {\em active}.
Recursively, an active base can further activate its own replays, inducing a hierarchical structure on base algorithms as captured by \Cref{fig}

\begin{figure}
	\begin{center}
		\includegraphics[scale=0.5]{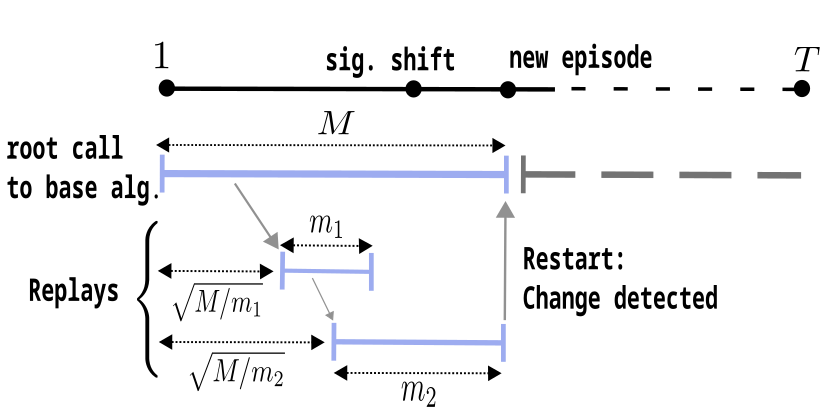}
	\end{center}
	\caption{\noindent \small
		Shown are two replay durations $m= m_1\text{ or } m_2$ occurring roughly every ${\sqrt{M/m}}$ rounds following the random schedule of Line~\ref{line:add-replay} of \Cref{meta-alg}, where $M$ is the eventual length of an episode.
		Each replay (blue segment) aims to detect a $1/\sqrt{m}$ magnitude \textbf{change}, i.e., an average dynamic regret $\frac{1}{m}\sum_{t=1}^m \delta_t(a)$ of order $1/\sqrt{m}$. As a recursive procedure, the replays of \base\, form a \emph{parent-child} relationship as depicted.
}\label{fig}
\end{figure}

A {\em global arm set} $\Aglobal$ tracks the arms retained by all base algorithms.
A restart is triggered when $\Aglobal$ becomes empty, or when there is no safe arm remaining which aligns with our notion of significant shift (\Cref{defn:sig-shift}).

}

\section{Gap-Dependent Dynamic Regret Bounds}\label{sec:gap}

We next turn to the task of studying gap-dependent regret rates for non-stationary bandits.
A first idea is to characterize the rate as that achieved by a {\em restarting oracle}, or an oracle procedure which restarts a stationary procedure at each changepoint.
In other words, the gap-dependent dynamic regret rate is defined as the sum over stationary periods of the stationary gap-dependent rates:
\begin{equation}\label{eq:restarting-gap}
	\sum_{\ell=1}^L \sum_{a:\delta_{\ell}(a) > 0} \frac{\log(T)}{\delta_{\ell}(a)},
\end{equation}
where $L$ is the number of stationary phases, the $\ell$-th of which has gap profile $\{\delta_{\ell}(a)\}_{a \in [K]}$.
Unfortunately, it is long been known in the switching bandit literature that, in the worst case, \Cref{eq:restarting-gap} cannot be attained simultaneously for different values of $L$ \citep{garivier2011,lattimore}.
The reason for such a hardness is intuitively because of the additional exploration required to detect unknown changes, which forces $\sqrt{T}$ regret.

Thus, a natural question remains: under what conditions can the rate \Cref{eq:restarting-gap} be achieved?
Yet, before answering this, an even more basic question is glaringly unaddressed.
Earlier (\Cref{sec:intro}), we discussed alternative measures of non-stationarity \citep{besbes2014,suk22,abbasi22}, calling into question whether \Cref{eq:restarting-gap} is even a sensible notion.
For instance, \Cref{eq:restarting-gap} must scale with the number of stationary periods $L$ which can be as large as $T$ even while the total variation remains small \citep{besbes2014} or while there are no changes in best arm or significant shifts \citep{suk22,abbasi22}.
Thus, it remains to be seen if there is a better gap-dependent rate, which is invariant of irrelevant non-stationarity, which can also be achieved adaptively without knowledge of non-stationarity.

In our next contribution, we give answers to both these questions in terms of the significant shift oracle, introduced in \Cref{sec:upper}.
Recalling such an oracle roughly plays arms until they incur significant regret \Cref{eq:bad-arm}, and restarts at each significant shift, we will see that a careful regret analysis of this oracle gives rise to a faster rate than \Cref{eq:restarting-gap} which is achievable adaptively in so-called {\em safe environments}.
\rev{While a safe environment is already defined earlier in \Cref{sec:meta} as that under which no significant shift occurs, we give a more refined agent-based definition here in \Cref{defn:safe-arm}.
}

Before getting into this, we summarize some of previous results in these directions.

\subsection{Related Work on Gap-Dependent Regret}

To start, we give an account of some works which aim to achieve the restarting oracle rate \Cref{eq:restarting-gap} under structured changes.
\begin{itemize}
	\item \cite{mukherjee2019} show a bound similar to \Cref{eq:restarting-gap} (albeit with a multiplicative factor which further depends on the difficulty of changes in gaps) under several assumptions on the changes: i.e., rewards of all arms change simultaneously, and changes are well-separated in time and large enough in magnitude so as to allow for fast-enough detection.
	\item \cite{seznec2020} achieve the rate \Cref{eq:restarting-gap} in (restless) rotting bandits with decreasing rewards. 
	\item \cite{besson2019} study structured non-stationarity where changes are sufficiently delayed in time to allow for detection; they show a $\sqrt{LKT}$ regret bound on non-stationary instances where the minimum gap is $\Omega(1)$ and speculate, based on experimental findings, that their procedure could achieve faster logarithmic regret (as in \Cref{eq:restarting-gap}) on some problem instances.
	\item \cite{allesiardo2017} show gap-dependent regret bounds in non-stationary environments with a {\em unique best arm}, which is more restrictive than our notion of safe environment (\Cref{defn:safe-arm}).
	Their rate is $\tilde{O}(K/\Delta)$ where $\Delta$ is the minimal average gap over time $\Delta := \min_{a \neq a^*} T^{-1} \sum_{t=1}^T \delta_t(a)$, which is similar to our new gap-dependent rate \Cref{eq:elim-problem-rate}.
\end{itemize}
To contrast, rather than directly making assumptions about the nature of changes, we show (\Cref{thm:elimination-gets}) the restarting oracle rate \Cref{eq:restarting-gap} can be attained under any non-stationarity so long as a safe arm remains intact (which drives the notion of {\em safe environment}; cf. \Cref{defn:safe-arm}).
In particular, changes of any kind (violating the structural assumptions listed above) are allowed in a safe environment.
In general, however, we caution that our safe environment assumption is incomparable to the assumptions on changes made above. 

On the other hand, we achieve rates {\em much faster than \Cref{eq:restarting-gap}} on safe environments.
Notably, our new rate is free of irrelevant non-stationarity (such as scaling with the raw number $L$ of changes).
The only other result, to our knowledge, which studies faster rates of this kind is \cite{krishnamurthy}.
For $K=2$ armed bandits, they give an alternative gap-dependent rate in terms of a so-called {\em detectable gap profile} which quantifies what size aggregate gap is detectable over time (regardless of non-stationarity).
However, while their proposed regret rate is logarithmic in the best case, it could scale like $\sqrt{T}$ even in safe environments.
Furthermore, the only procedure in said work achieving the detectable gap profile rate requires knowledge of non-stationarity.

\subsection{Refined Regret Analysis of the Significant Shift Oracle}

From the discussion of Appendix A of \cite{suk22}, it is already evident that the significant shift oracle, which has oracle knowledge of when arms incur significant regret \Cref{eq:bad-arm}, can attain safe regret of order $\sqrt{KT}$ on each significant phase.
Here, we argue that, on a single significant phase, a tighter gap-dependent regret rate can be attained.
To do so, we first set up some notation.

\begin{note}
Let $\mc{H}_t$ be be the $\sigma$-algebra generated by random reward variables $\{Y_s(a)\}_{s\leq t, a \in [K]}$ and exogenous time-varying randomness $\{\pi_s\}_{s \leq t}$, as used by an algorithm $\pi$.

We will use $t_1,\ldots,t_K$ to denote ordered stopping times with respect to the filtration $\{\mc{H}_t\}_{t\in [T]}$ and use $\mc{S}_1, \cdots, \mc{S}_T$ to denote random subsets of $[K]$ which are adapted to this filtration.
\end{note}


\begin{definition}\label{defn:recording-time}
	Let $t_0 \doteq 1$ and
	let $\mc{S}_1 \doteq [K]$.
	Then, we will recursively define $t_i$ and $\mc{S}_t$ for $t > t_{i-1}$ as follows:
	a stopping time $t_i > t_{i-1}$ is called an {\bf eviction time} w.r.t. initial time $t_{i-1}$ if $\mc{S}_{t_i-1} = \mc{S}_{t_i-2} = \cdots = \mc{S}_{t_{i-1}}$ and 
	\begin{align}
		\forall a \in \mc{S}_{t_i-1},  [s_1,s_2] \subseteq [1, t_i-1] : \sum_{s=s_1}^{s_2} \frac{\delta_s(a)}{|\mc{S}_s|} &\leq C_3 \sqrt{ \sum_{s=s_1}^{s_2} \frac{\log(T)}{|\mc{S}_s|}} \label{eq:sig-regret-under}.
	\end{align}
	$(t_1,t_2,\ldots,t_K)$ are a {\bf sequence of eviction times} with {\bf safe armsets} $\mc{S}_1 \supseteq \mc{S}_2 \supseteq \cdots \supseteq \mc{S}_T$.

\end{definition}

\begin{remark}
\Cref{defn:recording-time} can be seen as a generalization of \Cref{defn:sig-shift}.
The eviction time $t_i$ serves as a more refined version of the first round when an arm becomes unsafe in the sense of \Cref{eq:bad-arm} in \Cref{defn:sig-shift}.
The only major difference is that \Cref{eq:sig-regret-under} more carefully involves the variance of estimating each arm's reward while uniformly exploring actions in the safe armsets $\mc{S}_t$ (as the significant shift oracle does).
This modification is crucial for capturing the exact dependence on the number of arms when comparing to the restarting oracle rate (\Cref{thm:get-log}) and avoiding an extraneous $\log(K)$ factor in the analysis of \cite{suk22}.

\end{remark}


Then, given \Cref{defn:recording-time}, we propose the following new gap-dependent rate
\begin{equation}\label{eq:elim-problem-rate}
	\msc{R}(\{t_i\}_{i \in [K]}, \{\mc{S}_t\}_{t \in [T]}, \pi) \doteq \sum_{i=1}^K \sum_{t = t_{i-1}}^{t_{i} - 1} \mb{E}_{a \sim \Unif\{\mc{S}_t\}} [ \delta_t(a) ],
\end{equation}
Plainly speaking, \Cref{eq:elim-problem-rate} captures the regret of the significant shift oracle, or specifically an elimination procedure which tracks the safe armsets $\mc{S}_t$ and uniformly explores $\mc{S}_t$ at round $t$.
Note that \Cref{eq:elim-problem-rate} is a random quantity as the $t_i,\mc{S}_t$ may depend on the random rewards and exogenous randomness of some algorithm $\pi$.
Despite this randomness, we will next show that for any valid $t_i,\mc{S}_t$ satisfying \Cref{defn:recording-time}, the rate of \Cref{eq:elim-problem-rate} recovers \Cref{eq:restarting-gap} and is achievable adaptively in safe environments (\Cref{defn:safe-arm}).

\subsection{Properties of New Gap-Dependent Regret Rate (Proofs in Supplement)}

Proofs for this section are found in \Cref{app:gap}. 

\begin{theorem}[Recovering Restarting Oracle Rate]\label{thm:get-log}
	Let $\{t_i\}_{i \in [K]} , \{\mc{S}_t\}_{t \in [T]}$ be a sequence of eviction times and safe armsets per \Cref{defn:recording-time}.
	Then, for any environment with $L$ stationary phases with the $\ell$-th phase having gap profile $\{\delta_{\ell}(a)\}_{a \in [K]}$, we have for any algorithm/randomness $\{\pi_t\}_{t \in [T]}$:
	\[
		\msc{R}(\{t_i\}_{i \in [K]}, \{\mc{S}_t\}_{t \in [T]}, \pi) \leq C_3^2 \sum_{\ell = 1}^L \sum_{a: \delta_{\ell}(a) > 0} \frac{\log(T)}{\delta_{\ell}(a)}.
	\]
\end{theorem}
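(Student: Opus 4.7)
The strategy is to decouple the regret across arms and across stationary phases, then invoke the eviction condition \Cref{eq:sig-regret-under} once per (arm, phase) pair to convert a $\sqrt{\phantom{x}}$-bound into a $1/\delta$-bound.

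First, I would rewrite $\msc{R}$ by swapping the order of summation. Since $\mc{S}_t$ is nonincreasing in $t$ (it is constant on each $[t_{i-1}, t_i-1]$ and can only shrink at an eviction time $t_i$), I can let $i(a)$ denote the unique index with $a \in \mc{S}_{t_{i(a)}-1}$ but $a \notin \mc{S}_{t_{i(a)}}$, so that $a \in \mc{S}_t$ iff $t \in [1, t_{i(a)}-1]$. Exchanging sums:
\[
\msc{R} \;=\; \sum_{i=1}^K \sum_{t=t_{i-1}}^{t_i-1} \frac{1}{|\mc{S}_t|}\sum_{a\in\mc{S}_t}\delta_t(a) \;=\; \sum_{a\in [K]} \sum_{t=1}^{t_{i(a)}-1} \frac{\delta_t(a)}{|\mc{S}_t|}.
\]
Next, I would split the inner sum over the $L$ stationary phases. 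On phase $\ell$, $\delta_t(a)$ is constant equal to $\delta_\ell(a)$, so writing $I_{\ell,a}$ for the intersection of phase $\ell$ with $[1, t_{i(a)}-1]$ and $N_{\ell,a} \doteq \sum_{s\in I_{\ell,a}} 1/|\mc{S}_s|$, we get
\[
\sum_{t=1}^{t_{i(a)}-1} \frac{\delta_t(a)}{|\mc{S}_t|} \;=\; \sum_{\ell:\,\delta_\ell(a)>0} \delta_\ell(a)\,N_{\ell,a}.
\]

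The core step is to bound each $\delta_\ell(a) N_{\ell,a}$. Since $a \in \mc{S}_{t_{i(a)}-1}$, the eviction condition at eviction time $t_{i(a)}$ applies to $a$ on every subinterval of $[1, t_{i(a)}-1]$, in particular on $I_{\ell,a}$. Substituting $\delta_s(a)=\delta_\ell(a)$ on $I_{\ell,a}$ into \Cref{eq:sig-regret-under} yields
\[
\delta_\ell(a)\, N_{\ell,a} \;\leq\; C_3\sqrt{\log(T)\, N_{\ell,a}},
\]
which, upon squaring and dividing, gives $N_{\ell,a} \leq C_3^2 \log(T)/\delta_\ell(a)^2$, and therefore $\delta_\ell(a) N_{\ell,a} \leq C_3^2 \log(T)/\delta_\ell(a)$. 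Summing this bound over arms and phases (noting phases with $\delta_\ell(a)=0$ contribute nothing) yields the claim.

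\textbf{Main obstacle.} The argument is clean, but there is one bookkeeping subtlety: the eviction condition is an inequality over all subintervals of $[1, t_i-1]$ with the $|\mc{S}_s|$ factors varying along the interval, whereas I want to apply it only to the intersection $I_{\ell,a}$ of a single stationary phase with the lifetime of $a$. I need to verify that $I_{\ell,a}$ is indeed a (sub)interval of $[1, t_{i(a)}-1]$ (it is, since a stationary phase is an interval in time), and that the condition is applied at the correct eviction time $t_{i(a)}$ where $a$ still lies in $\mc{S}_{t_{i(a)}-1}$. The $|\mc{S}_s|$ changes across eviction sub-periods inside $I_{\ell,a}$ do not cause issues, because the $\sqrt{\phantom{x}}$-inequality combined with squaring is agnostic to how $N_{\ell,a}$ is distributed across sub-periods; only its total matters. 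After this observation, the rest is a direct calculation.
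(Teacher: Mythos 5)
Your proof is correct and follows essentially the same argument as the paper: decompose the regret over (arm, phase) pairs, observe that on the intersection of a stationary phase with an arm's lifetime the gap $\delta_\ell(a)$ is constant, then invoke the eviction condition \Cref{eq:sig-regret-under} on that subinterval and square to convert the $\sqrt{N_{\ell,a}}$ bound into a $\log(T)/\delta_\ell(a)$ bound. The only cosmetic difference is that you exchange the order of summation up front before fixing an arm and phase, whereas the paper fixes both first and works forward; the key step and the algebra are identical.
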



We next show \Cref{eq:elim-problem-rate} recovers the $\sqrt{KT}$ regret bound in safe environments, defined here.

\begin{definition}\label{defn:safe-arm}
	A bandit environment over $T$ rounds is called {\bf safe} if, for any shrinking sequence of armsets $\mc{G}_1 \supseteq \cdots \supseteq \mc{G}_T$, there exists a {\bf safe arm} $\asharp$ such that
	\begin{equation}\label{eq:safe-arm}
		\forall [s_1,s_2] \subseteq [1,T]: \sum_{t=s_1}^{s_2} \frac{\delta_t(\asharp)}{|\mc{G}_t|} \leq C_4 \sqrt{\sum_{t=s_1}^{s_2} \frac{\log(T)}{|\mc{G}_t|} }.
	\end{equation}
	In such an environment, any eviction times $\{t_i\}_{i=1}^K$ and safe armsets $\{\mc{S}_i\}_{t=1}^T$ can be assumed WLOG to satisfy $\asharp \in \mc{S}_T$ and $t_K = T+1$, as the safe arm $\asharp$ always satisfies \Cref{eq:sig-regret-under}. 
\end{definition}

\begin{remark}
	A significant shift cannot occur in a safe environment.
	Indeed, taking $\mc{G}_t \equiv [K]$ in \Cref{eq:safe-arm} gives us $\sum_{t=s_1}^{s_2} \delta_t(\asharp) < C_4 \sqrt{K \cdot \log(T) \cdot (s_2 - s_1+1) }$ which is a generalization of the reversal of \Cref{eq:bad-arm}.
\end{remark}

\begin{theorem}[Recovering $\sqrt{KT}$ Rate]\label{thm:recover-sqrt}
	We have, for any safe environment with eviction times and safe armsets $\{t_i\}_{i \in [K]}, \{\mc{S}_t\}_{t \in [T]}$, and algorithm $\{\pi_t\}_{t\in [T]}$:
	\[
		\msc{R}(\{t_i\}_{i \in [K]}, \{\mc{S}_t\}_{t \in [T]}, \pi) \leq C_5 \sqrt{KT\log(T)}.
	\]
\end{theorem}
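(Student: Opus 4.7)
\textbf{Proof plan for Theorem~\ref{thm:recover-sqrt}.}

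The plan is to rearrange the sum defining $\msc{R}$ by arm (rather than by eviction phase) and then to invoke the per-arm inequality in \Cref{eq:sig-regret-under} a single time for each arm, before finishing via Cauchy--Schwarz.

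First, because the safe armsets are nested ($\mc{S}_1 \supseteq \mc{S}_2 \supseteq \cdots \supseteq \mc{S}_T$), for each arm $a \in [K]$ the set $\{t \in [1,T] : a \in \mc{S}_t\}$ is a prefix $[1,\tau_a]$. In the safe environment setting, the safe arm $\asharp$ stays in $\mc{S}_T$ (so $\mc{S}_t$ is nonempty for every $t$) and we may take $t_K = T+1$; moreover, whenever $\tau_a < T$, the time $\tau_a + 1$ coincides with one of the eviction times $t_i$, and $a \in \mc{S}_{t_i - 1}$ by construction. Swapping the order of summation yields
\[
\msc{R}(\{t_i\}_{i\in[K]}, \{\mc{S}_t\}_{t\in[T]}, \pi)
= \sum_{t=1}^{T} \frac{1}{|\mc{S}_t|}\sum_{a \in \mc{S}_t} \delta_t(a)
= \sum_{a \in [K]} \sum_{t=1}^{\tau_a} \frac{\delta_t(a)}{|\mc{S}_t|}.
\]

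Next I apply \Cref{eq:sig-regret-under} to each arm $a$ individually, using the interval $[s_1,s_2] = [1,\tau_a]$. For $\tau_a < T$ the required hypothesis $a \in \mc{S}_{t_i - 1}$ at the eviction time $t_i = \tau_a + 1$ is immediate, and for $\tau_a = T$ (e.g.\ the safe arm) it holds by the definition at $t_K = T+1$. In both cases:
\[
\sum_{t=1}^{\tau_a} \frac{\delta_t(a)}{|\mc{S}_t|} \;\leq\; C_3 \sqrt{\sum_{t=1}^{\tau_a} \frac{\log T}{|\mc{S}_t|}}.
\]

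Summing over $a$ and applying Cauchy--Schwarz,
\[
\msc{R} \;\leq\; C_3 \sum_{a \in [K]} \sqrt{\sum_{t=1}^{\tau_a} \frac{\log T}{|\mc{S}_t|}}
\;\leq\; C_3 \sqrt{K} \cdot \sqrt{\sum_{a \in [K]} \sum_{t=1}^{\tau_a} \frac{\log T}{|\mc{S}_t|}}.
\]
A final swap of summations collapses the inner sum: $\sum_{a} \sum_{t : a \in \mc{S}_t} \frac{1}{|\mc{S}_t|} = \sum_{t=1}^T \frac{|\mc{S}_t|}{|\mc{S}_t|} = T$. Substituting gives $\msc{R} \leq C_3 \sqrt{KT \log T}$, so the theorem holds with $C_5 = C_3$.

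I do not anticipate any real obstacle here; the only subtlety is justifying that every arm is covered by some eviction-time inequality, which requires the safe-environment hypothesis precisely to rule out the degenerate case $\mc{S}_t = \emptyset$ and to validate $t_K = T+1$ so that arms surviving all the way to round $T$ still satisfy \Cref{eq:sig-regret-under} on $[1,T]$. Everything else is the standard ``prefix + Cauchy--Schwarz'' telescoping argument used for UCB-type analyses, specialized to the uniform-over-$\mc{S}_t$ exploration implicit in \Cref{eq:elim-problem-rate}.
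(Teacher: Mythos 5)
Your proof is correct and follows essentially the same route as the paper's: re-index the double sum by arm, apply \Cref{eq:sig-regret-under} once per arm over the prefix on which it remains in the armset, apply Cauchy--Schwarz (the paper calls it Jensen), and collapse $\sum_a \sum_{t:a\in\mc{S}_t} |\mc{S}_t|^{-1} = T$. The only cosmetic difference is that you perform the final collapse by a direct Fubini swap over arms, whereas the paper groups by eviction index and uses $|\mc{S}_{t_{i-1}}| = K-i+1$; both are the same telescoping.
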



\subsection{Elimination Achieves Gap-Dependent Regret Rate in Safe Environments}

\rev{
As already hinted up to this point, we posit that a randomized elimination procedure (\Cref{alg}) similar to \Cref{base-alg} in fact attains the rate of \Cref{eq:elim-problem-rate} in safe environments.
The major key difference in \Cref{alg} is the absence of the replay scheduling required for \meta and the use of refined elimination thresholds which tightly capture the variance of estimation.
}


\paragraph*{Randomized Successive Elimination.}
\rev{
Going into detail, we estimate the relative gap $\delta_t(a',a)$ via the unscaled estimates $\hat{\delta}_t(a',a)/|\mc{A}_t|$ where we recall $\hat{\delta}_t(a',a)$ of \Cref{eq:estimates} is the importance-weighted estimate and $\mc{A}_t$ is the active set of arms at round $t$.
}

 \begin{algorithm2e}[h]
 	\caption{{Randomized Successive Elimination}}
 \label{alg}
 \textbf{Initialize}: $\mc{A}_t \leftarrow [K]$. \\
 	  \For{$t=1,2,\ldots,T$}{
 		  Play a random arm $a\in \mc{A}_t$ selected with probability $1/|\mc{A}_t|$. \label{line:play}\\
 		 \bld{Evict bad arms:}\\
		 $\mc{A}_{t} \leftarrow \mc{A}_{t} \bs \left\{a\in [K]:\text{$\exists $ round $t_0 \leq t$: $\underset{a' \in \mc{A}_t}{\max} \sum_{s=t_0}^{t} \frac{\hat{\delta}_s(a',a)}{|\mc{A}_s|} > C_6 \sqrt{\sum_{s=t_0}^t \frac{\log(T)}{|\mc{A}_s|} }$}\right\}$. \label{line:evict-At}\\
 	}
\end{algorithm2e}

\begin{theorem}\label{thm:elimination-gets}
	Given any safe bandit environment over $T$ rounds, letting $\pi$ be \Cref{alg}, we have w.p. at least $1-1/T^2$, for some eviction times and safe armsets $\{t_i\}_{i \in [K]}, \{\mc{S}_t\}_{t \in [T]}$:
	\[
		\sum_{t=1}^T \delta_t(\pi_t) \leq C_5 \cdot \left( \log(T) + \msc{R}(\{t_i\}_{i\in [K]}, \{\mc{S}_t\}_{t \in [T]}, \pi) \right).
	\]
\end{theorem}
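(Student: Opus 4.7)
The plan is to take $\mc{S}_t := \mc{A}_t$ (the algorithm's active armset at round $t$) and to let $\{t_i\}_{i \in [K]}$ be the rounds at which $|\mc{A}_t|$ strictly decreases (padded by $T+1$ when fewer than $K$ evictions occur), so that the eviction times come for free from the algorithm itself. The first step is to establish a high-probability concentration event $\mc{E}$. Since, conditional on $\mc{H}_{s-1}$, the scaled estimate $\hat{\delta}_s(a',a)/|\mc{A}_s|$ is unbiased for $\delta_s(a',a)$ with conditional variance of order $1/|\mc{A}_s|$, a time-uniform Freedman inequality together with a union bound over $O(T^2 K^2)$ interval/pair choices yields
\[
	\left|\sum_{s=t_0}^t \frac{\hat{\delta}_s(a',a) - \delta_s(a',a)}{|\mc{A}_s|}\right| \leq c_1 \sqrt{\sum_{s=t_0}^t \frac{\log(T)}{|\mc{A}_s|}}
\]
simultaneously for all $[t_0,t] \subseteq [1,T]$ and $a,a' \in [K]$, with probability at least $1-1/T^2$.

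Next, I would show that a safe arm $\asharp$ persists in $\mc{A}_t$ throughout, conditional on $\mc{E}$. Fix a sample path and apply \Cref{defn:safe-arm} with $\mc{G}_s := \mc{A}_s$ to obtain a safe arm $\asharp$ for this trajectory. If $\asharp$ were ever evicted, at a first round $t^*$ the eviction test for $\asharp$ must fail with some witness $[t_0, t^*-1]$ and $a' \in \mc{A}_{t^*-1}$; concentration and the monotonicity $\delta_s(a',\asharp) \leq \delta_s(\asharp)$ then force $\sum_{s=t_0}^{t^*-1}\delta_s(\asharp)/|\mc{A}_s| > (C_6 - c_1)\sqrt{\sum_{s=t_0}^{t^*-1}\log(T)/|\mc{A}_s|}$, contradicting \Cref{eq:safe-arm} whenever $C_6 > C_4 + c_1$. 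With $\asharp \in \mc{A}_t$ for all $t$ in hand, I would verify \Cref{defn:recording-time} for our choice of $\{t_i\},\{\mc{S}_t\}$: constancy of $\mc{S}$ between evictions is immediate, and for any arm $a$ active just before $t_i$ and any $[s_1,s_2] \subseteq [1,t_i-1]$, non-elimination of $a$ at round $s_2$ with reference $a' = \asharp$ gives an empirical bound which concentration upgrades to a bound on $\sum_s \delta_s(\asharp,a)/|\mc{A}_s|$; combined with the decomposition $\delta_s(a) = \delta_s(\asharp) + \delta_s(\asharp,a)$ and \Cref{eq:safe-arm} for $\asharp$, this yields \Cref{eq:sig-regret-under} with $C_3 := C_4 + C_6 + c_1$.

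Finally, since $\pi_t \sim \Unif(\mc{A}_t) = \Unif(\mc{S}_t)$, we have $\mb{E}[\delta_t(\pi_t)\mid \mc{H}_{t-1}] = \mb{E}_{a\sim\Unif(\mc{S}_t)}[\delta_t(a)]$, whose sum over $t$ is exactly $\msc{R}(\{t_i\},\{\mc{S}_t\},\pi)$. Applying Freedman's inequality to the martingale $\sum_t(\delta_t(\pi_t) - \mb{E}[\delta_t(\pi_t)\mid\mc{H}_{t-1}])$ with per-step conditional variance bounded by $\mb{E}[\delta_t(\pi_t)\mid\mc{H}_{t-1}]$ (as $\delta_t \in [0,1]$) yields deviation at most $c_2(\sqrt{\msc{R}\log(T)} + \log(T))$, which by AM-GM reduces to $\msc{R}/2 + O(\log(T))$, proving the stated bound after absorbing constants into $C_5$. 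I expect the main obstacle to be the second step: the safe arm $\asharp$ supplied by \Cref{defn:safe-arm} depends on the random trajectory $\{\mc{A}_s\}$, which in turn depends on whether $\asharp$ is eliminated, so the ``first-eviction'' contradiction argument must be handled sample-path by sample-path with the constants $C_4, C_6, c_1$ ordered in the correct inequality $C_6 > C_4 + c_1$ so that the safe-arm bound strictly beats the eviction test.
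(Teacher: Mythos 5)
Your proposal is correct and takes essentially the same approach as the paper: identify $\mc{S}_t$ with the algorithm's active set $\mc{A}_t$ and $\{t_i\}$ with the realized eviction times, establish a Freedman-type concentration event for both the importance-weighted gap estimates and the realized regret, show the safe arm $\asharp$ persists via a contradiction at the first eviction, verify \Cref{eq:sig-regret-under} through the decomposition $\delta_s(a) = \delta_s(\asharp) + \delta_s(\asharp,a)$ (bounding each piece by the safe-arm property and the non-eviction criterion respectively), and finish by concentrating $\sum_t\delta_t(\pi_t)$ around $\msc{R}$ with an AM-GM absorption. The subtlety you flag — that the safe arm supplied by \Cref{defn:safe-arm} depends on the realized trajectory $\{\mc{A}_s\}$ — is real but resolves exactly as you suggest: $\{\mc{A}_s\}$ is determined by the sample path independently of which arm happens to be safe, so one may apply the definition pathwise and carry out the first-eviction contradiction with the ordering $C_6 > C_4 + c_1$ on constants, which is also implicit in the paper's proof.
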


Putting the previous results together, we conclude that elimination not only attains the gap-dependent restarting oracle rate $\sum_{\ell=1}^L \sum_{a:\delta_{\ell}(a)>0} \frac{\log(T)}{\delta_{\ell}(a)}$ but further attains a much faster rate \Cref{eq:elim-problem-rate} which is free of irrelevant non-stationarity.
In particular:
\begin{itemize}
	\item There is no dependence in \Cref{eq:elim-problem-rate} on $L$, the number of changes in rewards or even the number $S$ of best arm switches.
	We can in fact have $S,L = \Omega(T)$ while $\msc{R}(\{t_i\}_{i\in [K]},\{\mc{S}_t\}_{t \in [T]},\pi)$ is small.
	\item As \Cref{eq:elim-problem-rate} only depends on the gaps, it is completely free of any changes in mean rewards which preserve the gaps (i.e., rewards of arms changing together)
\end{itemize}

On the other hand, as mentioned earlier, it is known in switching bandits that the restarting oracle rate \Cref{eq:restarting-gap} cannot be achieved adaptively for unknown $L$.
However, this does not contradict our findings because the constructed hard environment \cite[e.g. Theorem 31.2]{lattimore} is not safe, violating \Cref{defn:safe-arm}. 
Thus, similar to \cite{suk22}, we find that the notion of significant shift (which decides the safeness of an environment) characterizes {\em difficult non-stationarity} in a new sense.
So long as such a shift does not occur, we can attain the faster rate \Cref{eq:elim-problem-rate}.

\subsection{Lower Bound for Gap-Dependent Regret Rate}

We next give a sense in which our new gap-dependent regret rate \Cref{eq:elim-problem-rate} is the best achievable rate.
We do this by showing that the minimax regret rate over the class of all non-stationary environments with bounded $\msc{R}(\{t_i\}_{i \in [K]}, \{\mc{S}_t\}_{t \in [T]}, \pi) \leq R$ is $\Omega(R)$.

\begin{remark}[Log Factor not included in Lower Bound]
We note the $\log(T)$ factor in \Cref{eq:sig-regret-under} of \Cref{defn:recording-time} was only included for the sake of showing the regret upper bounds established up to this point.
Going forward, we will ignore the $\log(T)$ factor when we refer to \Cref{eq:sig-regret-under}.
\end{remark}

\begin{theorem}\label{thm:lower-gap}
	Let $\{t_i\}_{i=1}^K$ be an arbitrary set of rounds such that $t_{i+1} - t_i + 1 \geq K$ for all $i \in [K]$ with the convention that $t_0 \doteq 1$ and $t_{K-1} = t_K \doteq T+1$.
	Fix a positive real number $R$ such that $R \leq \sum_{i=1}^{K-1} \sqrt{ (t_i - t_{i-1}) \cdot (K + 1 - i)}$.
	Let $\msc{E}$ be the class of environments such that (a) $t_1,\ldots,t_K$ are valid deterministic eviction times with $C_3 = (K-2)^{1/2}$ in \Cref{eq:sig-regret-under} for some shrinking sequence of safe armsets $\mc{S}_1 \supseteq \cdots \supseteq \mc{S}_T$ and (b) such that:
	\[
		\sum_{i=1}^K \sum_{t=t_{i-1}}^{t_i-1} \mb{E}_{a \sim \Unif\{\mc{S}_t\}}[ \delta_t(a) ] \leq R.
	\]
	Then, for any algorithm $\pi$, we have:
	\[
		\sup_{\mc{E} \in \msc{E}} R_{\mc{E}}(\pi, T) \geq \Omega( R).
	\]
\end{theorem}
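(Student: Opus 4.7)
My plan is to construct a parameterized family of environments $\{\mc{E}_{\vec{a}}\}_{\vec{a}}\subset \msc{E}$ indexed by per-phase hidden best arms, then apply a standard Le Cam style bandit lower bound in each phase separately and aggregate.

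Since $t_{K-1}=t_K\doteq T+1$, the eviction times partition $[1,T]$ into $K-1$ phases $\mc{P}_i\doteq[t_{i-1},t_i-1]$ of length $T'_i\doteq t_i-t_{i-1}$. I fix shrinking safe armsets by $\mc{S}_t\equiv \mc{S}_i\doteq\{i,i+1,\ldots,K\}$ for $t\in \mc{P}_i$, so that $|\mc{S}_t|=K'_i\doteq K+1-i$ and arm $i$ is evicted at $t_i$. Decompose $R=\sum_i R_i$ via $R_i\doteq R\cdot \sqrt{K'_i T'_i}/\sum_j \sqrt{K'_j T'_j}$, so that $R_i\le \sqrt{K'_i T'_i}$ by hypothesis. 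Setting $\Delta_i\doteq R_i/T'_i$, for each tuple $\vec{a}=(a_1,\ldots,a_{K-1})$ with $a_i\in \mc{S}_i$, I define the Bernoulli environment $\mc{E}_{\vec{a}}$ in which, during phase $i$, arm $a_i$ has mean $1/2+\Delta_i$, the other arms of $\mc{S}_i$ have mean $1/2$, and arms outside $\mc{S}_i$ have mean $0$.

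To verify $\mc{E}_{\vec{a}}\in \msc{E}$, the oracle rate equals $\sum_i \tfrac{K'_i-1}{K'_i}\Delta_i T'_i\le \sum_i R_i=R$, giving (b). For the eviction condition (a), the worst sub-interval $[s_1,s_2]\subseteq[1,t_i-1]$ for an arm $a\in \mc{S}_{t_i-1}$ occurs when $a\neq a_j$ in every overlapping phase $\mc{P}_j$; writing $l_j$ for the overlap length, Cauchy--Schwarz gives
\[
\Bigl(\sum_j \tfrac{\Delta_j l_j}{K'_j}\Bigr)^{\!2} \;\le\; \Bigl(\sum_j \tfrac{\Delta_j^2 l_j}{K'_j}\Bigr)\Bigl(\sum_j \tfrac{l_j}{K'_j}\Bigr) \;\le\; (K-2)\sum_j \tfrac{l_j}{K'_j},
\]
where the last step uses $\Delta_j^2 T'_j/K'_j=R_j^2/(K'_j T'_j)\le 1$ after absorbing a $\sqrt{(K-1)/(K-2)}$ constant rescaling of $R$ into the eventual $\Omega(R)$. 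For the regret lower bound, the key observation is that the $a_i$'s are drawn independently across phases, so conditioning on the history $\mc{H}_{t_{i-1}-1}$, which is independent of $a_i$, reduces phase $i$ to an isolated $K'_i$-armed stochastic bandit problem with uniformly random best arm and gap $\Delta_i$. The canonical Le Cam/Pinsker lower bound then yields expected phase-$i$ regret $\Omega(\min(\Delta_i T'_i,\sqrt{K'_i T'_i}))=\Omega(R_i)$, using $R_i\le \sqrt{K'_i T'_i}$. Summing over phases and averaging over uniformly random $\vec{a}$, $\sup_{\mc{E}\in\msc{E}} R_{\mc{E}}(\pi,T)\ge \mb{E}_{\vec{a}}[R_{\mc{E}_{\vec{a}}}(\pi,T)]\ge \Omega(\sum_i R_i)=\Omega(R)$.

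The main obstacle is the eviction condition across sub-intervals spanning multiple phases: the natural Cauchy--Schwarz bound yields a factor $\sqrt{K-1}$ rather than $\sqrt{K-2}$, which must be absorbed by rescaling $R$ by the bounded constant $\sqrt{(K-2)/(K-1)}$, affecting only the implicit constant in $\Omega(R)$. A secondary care-point is that one must show the per-phase lower bound holds after conditioning on arbitrary past history; this is standard once one notes the algorithm's conditional distribution on phase-$i$ actions depends on $a_i$ only through phase-$i$ observations, so prior information is statistically useless for identifying $a_i$.
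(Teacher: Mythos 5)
Your proposal is correct and takes essentially the same approach as the paper: partition $R$ into per-phase budgets $R_i$ satisfying $R_i\le\sqrt{K'_iT'_i}$, construct per-phase two-point (Le Cam/Bretagnolle--Huber) alternatives with gaps $\Delta_i\asymp R_i/T'_i$, verify the eviction-time condition via Cauchy--Schwarz/Jensen across overlapping phases (absorbing the $\sqrt{K-1}$-vs-$\sqrt{K-2}$ slack into constants, exactly as the paper's argument also implicitly must), and sum the per-phase $\Omega(R_i)$ bounds. The only cosmetic differences are that you use a proportional split of $R$ rather than the paper's greedy recursive split, and you randomize over all phases' best arms jointly rather than invoking a separate single-phase base lemma (\Cref{thm:lower-base}) phase by phase — both are fine and yield the same result.
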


\section{Achievability of Gap-Dependent Rate in terms of Smoothness}

We've seen that the achievability of our new gap-dependent rate \Cref{eq:elim-problem-rate} hinges on whether an environment is safe (\Cref{defn:safe-arm}), or roughly whether a significant shift occurs.
In the smooth bandit model, a safe environment is cleanly characterized via the ``maximum H\"{o}lder coefficient''
Let $f_a(x) \doteq \delta_{x\cdot T}(a)$ be the normalized-in-time gap function for arm $a$.
First we define this ``maximum H\"{o}lder coefficient''.
\[
	\lambda_n \doteq  \sup_{a \in [K]} \sup_{x \in [0,1]} f_a^{(n)}(x).
\]
Then, it turns out an environment is safe if $\max_n \lambda_n \leq \sqrt{K/T}$, while $\max_n \lambda_n > \sqrt{K/T}$ allows for unsafe environments.
Thus, $\sqrt{K/T}$ is the critical value marking a phase transition in the achievable dynamic regret rates.
Our final result, whose proof mostly re-packages earlier results, describes this phase transition.

\begin{theorem}[proof in supplement]
\label{thm:phase-transition}
	We have:
	\begin{enumerate}
		\item For any $\beta,\lambda > 0$, any H\"{o}lder class $\Sigma(\beta,\lambda)$ environment with $ \max_{n=0,\ldots,\floor{\beta}} \lambda_{n} \leq \sqrt{K/T}$ is safe. \label{item:1}
		\item For any $n \in \mb{N}$, the minimax regret over the class of non-stationary environments with $\lambda_n \leq \lambda $ for real $\lambda > \sqrt{K/T}$ is $\Omega(\sqrt{KT})$. \label{item:2}
	\end{enumerate}
\end{theorem}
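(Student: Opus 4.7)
The proof splits into achievability (Item~\ref{item:1}) and lower bound (Item~\ref{item:2}), and the whole argument hinges on how the maximum-over-derivatives hypothesis interacts with just the order-zero piece $\lambda_0$.

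For Item~\ref{item:1}, the key observation is that $\max_n \lambda_n \leq \sqrt{K/T}$ in particular forces $\lambda_0 \leq \sqrt{K/T}$, and since $\delta_t(a) = f_a(t/T) \leq \lambda_0$ this gives the pointwise bound $\delta_t(a) \leq \sqrt{K/T}$ uniformly over $a\in[K]$ and $t \leq T$. Then for any arm $a$ and any sub-interval $[s_1, s_2]$ of length $N \doteq s_2 - s_1 + 1 \leq T$,
\[
\sum_{t=s_1}^{s_2} \delta_t(a) \;\leq\; N \cdot \sqrt{K/T} \;=\; \sqrt{K N^2/T} \;\leq\; \sqrt{KN},
\]
where the final step uses $N \leq T$. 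Thus no arm ever satisfies the significant-regret condition \Cref{eq:bad-arm} on any interval, so by \Cref{defn:sig-shift} no significant shift is ever recorded, which (per the remark just after \Cref{defn:safe-arm}) is precisely what it means for the environment to be safe. Chaining through \Cref{cor:upper} (equivalently, the oracle rate \Cref{eq:regret-bound-sig} with $\Lsig = 0$) then recovers the $\sqrt{KT}$ rate, consistent with the algebraic fact that $T^{(\beta+1)/(2\beta+1)} \lambda^{1/(2\beta+1)} K^{\beta/(2\beta+1)}$ collapses to $\sqrt{KT}$ exactly at $\lambda = \sqrt{K/T}$.

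For Item~\ref{item:2}, it suffices to embed the classical stationary $K$-armed bandit lower bound into the class $\{\lambda_n \leq \lambda\}$. Any stationary environment has time-constant gap functions, so $f_a^{(n)} \equiv 0$ for all $n \geq 1$ and the constraint $\lambda_n \leq \lambda$ is automatic for $n \geq 1$; one can then invoke the standard $\Omega(\sqrt{KT})$ stationary lower bound using a gap-$\Theta(1)$ hard instance. For $n = 0$ the constraint $\lambda_0 \leq \lambda$ additionally caps gaps at $\lambda$, so I would plug in the usual hard stationary bandit with uniform gap $\Delta = \min(\lambda, 1/2)$; the classical stationary minimax lower bound $\Omega(\min(T\Delta, \sqrt{KT}))$ specializes to $\Omega(\sqrt{KT})$ precisely because $\Delta > \sqrt{K/T}$ gives $T \Delta > \sqrt{KT}$.

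The only subtle point I foresee is definitional. If ``safe'' in the theorem is read literally as \Cref{defn:safe-arm}, with its $1/|\mc{G}_t|$-weighted condition for \emph{every} shrinking $\{\mc{G}_t\}$, then the pointwise bound $\delta_t(a) \leq \sqrt{K/T}$ can be tight in the worst case (e.g.\ when $|\mc{G}_t| \equiv 1$ on a long interval, where $\sum \delta_t(\asharp)/|\mc{G}_t|$ is as large as $\sqrt{KT}$ while $C_4\sqrt{\log(T)\sum 1/|\mc{G}_t|}$ is only $C_4\sqrt{T\log T}$). I would therefore rely on the weaker equivalent characterization made explicit right after \Cref{defn:safe-arm}, namely ``no significant shift is recorded'', which is exactly what is needed to invoke \Cref{cor:upper} and conclude the $\sqrt{KT}$ rate; any residual strengthening to the $\{\mc{G}_t\}$-weighted form would have to exploit structural constraints on admissible shrinking sequences (e.g.\ arising from elimination), not just the pointwise gap bound.
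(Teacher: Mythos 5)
Your argument for Item~\ref{item:1} reaches the same pointwise bound as the paper, but by a more elementary route: you observe that $\lambda_0 \leq \sqrt{K/T}$ alone forces $\delta_t(a) \leq \sqrt{K/T}$ for \emph{every} arm and round, whereas the paper fixes the arm $a$ optimal at round 1 (so $f_a(1/T)=0$) and Taylor-expands about $1/T$ with Lagrange remainder, using $\lambda_1,\dots,\lambda_{\floor{\beta}}$ and the geometric series bound $\sum_{k\geq 1} 1/k! = e-1$ to get $f_a(x)\leq (e-1)\sqrt{K/T}$. Both arguments are valid under the stated hypothesis (which does include $n=0$); yours is shorter and gives a cleaner constant, while the paper's would in principle work under the weaker hypothesis $\max_{n\geq 1}\lambda_n\leq \sqrt{K/T}$ alone. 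More importantly, the subtlety you flag at the end is genuine and applies to the paper's proof just as much as to yours: the paper also concludes ``arm $a$ must be safe in the sense of \Cref{eq:safe-arm}\dots since the gap at any round cannot exceed $\sqrt{K/T}$,'' which, as you correctly note, does not imply the $\{\mc{G}_t\}$-weighted inequality of \Cref{defn:safe-arm} with a universal constant when $|\mc{G}_t|$ can be as small as $1$ or $2$ over a long stretch (the LHS scales like $\sqrt{KT}$ while the RHS like $\sqrt{T\log T}$). Your resolution---retreating to the ``no significant shift'' characterization (take $\mc{G}_t\equiv [K]$), which suffices to invoke \Cref{cor:upper} and recover the $\sqrt{KT}$ rate---is the sensible reading, and the paper implicitly leans on it as well without making the logical step explicit.

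For Item~\ref{item:2}, you take a different and more elementary route than the paper. The paper reuses the non-stationary bump-function construction from the proof of \Cref{thm:lower-bound} with $(\beta',\lambda')=(n,\lambda)$, observes these environments satisfy $\lambda_n\leq\lambda$, and then invokes the lower bound $\Omega(T^{\frac{n+1}{2n+1}}K^{\frac{n}{2n+1}}\lambda^{\frac{1}{2n+1}}) \geq \Omega(\sqrt{KT})$ using $\lambda > \sqrt{K/T}$. You instead embed a stationary hard family (all derivatives vanish, so $\lambda_n\leq\lambda$ is automatic for $n\geq 1$) and cite the classical stationary $\Omega(\sqrt{KT})$ lower bound. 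Your route is simpler; the paper's route buys the stronger, smoothness-dependent rate. One small imprecision to fix in your writeup: you should set the gap of the hard stationary instance to $\Delta=\sqrt{K/T}$ (which is $\leq\lambda$ by assumption), not $\Delta=\min(\lambda,1/2)$; with gap $\Delta=\lambda\gg\sqrt{K/T}$ the change-of-measure argument does not directly yield $\Omega(\sqrt{KT})$, although your cited minimax formula $\Omega(\min(T\Delta,\sqrt{KT}))$ over the class of gaps \emph{at most} $\Delta$ does resolve to the correct conclusion because the hard instance there actually uses gap $\sqrt{K/T}$.
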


\section*{Acknowledgements}

We thank Samory Kpotufe and anonymous referees for useful feedback on the manuscript.
We also acknowledge computing resources from Columbia University's Shared Research Computing Facility project, which is supported by NIH Research Facility Improvement Grant 1G20RR030893-01, and associated funds from the New York State Empire State Development, Division of Science Technology and Innovation (NYSTAR) Contract C090171, both awarded April 15, 2010.

\bibliographystyle{joe}
\bibliography{bibs/bandit_general,
bibs/nonstat,
bibs/online,
bibs/duel,
bibs/contextual,
bibs/nonpar,
bibs/drift,
bibs/slow,
bibs/suk
}

\newpage
\appendix

\section{Proof of Dynamic Regret Lower Bound (\Cref{thm:lower-bound})}\label{app:lower}

\paragraph*{Overview of Argument.}
The construction will rely on bump reward functions which also appear in the classical minimax lower bounds for integrated risk of nonparametric regression of $(\beta,\lambda)$-H\"{o}lder functions \cite[e.g., Section 2.5]{tsybakov}.
This will be combined with a Le Cam's method style of argument for establishing a regret lower bound for stationary bandits \cite[Theorem 15.2]{lattimore},
which we will modify to work for segments of mild non-stationarity.

\paragraph*{Preliminaries.}
First, getting some trivial cases out of the way, let's assume that $K \leq T/4$ or else we can just trivially show a lower bound of order $K$ using a stationary construction (which is always $(\beta,\lambda)$-H\"{o}lder for any $\beta,\lambda > 0$).

Let's also assume that
\[
	\sqrt{KT} \leq T^{\frac{\beta+1}{2 \beta +1}} \cdot \lambda^{\frac{1}{2\beta+1}} \cdot K^{\frac{\beta}{2\beta +1}} \iff \sqrt{K/T} \leq \lambda,
\]
or else again we can appeal to the well-known $\sqrt{KT}$ stationary lower bound.


Now, let $\tilde{\lambda} \doteq (2^{-(2\beta+1)} \cdot (T/K)^{\beta} ) \land \lambda$.
Let $M \doteq \ceil{ T^{\frac{1}{2 \beta +1}} \cdot K^{-\frac{1}{2 \beta +1}} \cdot \tilde{\lambda}^{ \frac{2}{2 \beta +1}} }$.
Now, since $\tilde{\lambda} \leq (T/K)^{\beta} \cdot 2^{-(2\beta+1)}$,
we have that $M \leq \ceil{T/4}$.
Next, we argue that WLOG $M$ divides $T$.
If this is not the case, then we can replace the horizon $T$ with $T_0 \doteq M \cdot \floor{T/M} \leq T$, which is a multiple of $M$, and show the lower bound for $T_0$ which suffices for the end result since $T_0 \geq T - M \geq T/2$.

At a high level, we will construct $M$ instances of a randomly selected $(\beta,\lambda)$-H\"{o}lder environment of length $T/M$.
we will then argue that the concatenation of any such realized $M$ environments is itself a $(\beta,\lambda)$-H\"{o}lder environment over $T$ rounds.

Intuitively, over each period of length $T/M$, the constructed sub-environment will be nearly stationary and ensure a regret lower bound of order $\sqrt{K\cdot (T/M)}$. 
Then, summing over the $M$ sub-environments and taking a random prior over choice of instances, we get a dynamic regret lower bound of order
\[
	\sqrt{TKM} \geq T^{\frac{\beta+1}{2\beta +1}} \cdot \tilde{\lambda}^{\frac{1}{2 \beta+1}} \cdot K^{\frac{\beta}{2\beta+1}}.
\]
If $\tilde{\lambda} = \lambda$, we are done.
If $\tilde{\lambda} < \lambda$, then $T^{\frac{\beta+1}{2\beta+1}} \lambda^{\frac{1}{2\beta+1}} K^{\frac{\beta}{2\beta+1}} \geq c_0 T$ so that it suffices to show a linear regret lower bound.
Since $\tilde{\lambda} < \lambda \implies \tilde{\lambda} \propto (T/K)^{\beta}$, plugging this into the above RHS indeed gives us said linear regret lower bound.

We proceed by first defining the sub-environment over $T/M$ rounds.

\paragraph*{Defining Bump Function Mean Rewards.}
First, define the function $\vphi:[0,1]\to \mb{R}_{\geq 0}$ as:
\[
	\vphi(x) \doteq \frac{\tilde{\lambda} \cdot h^{\beta}}{2} \cdot \Phi\left( \frac{x - h/2}{h} \right),
\]
where $h \doteq 1/M$ is a bandwidth and $\Phi$ is the $C^{\infty}$ bump function
\[
	\Phi(u) \doteq \exp\left(-\frac{1}{1-u^2} \right)\cdot \pmb{1}\{\|u\|\leq 1\}.
\]
Now, consider an {\em assignment} of $M$ best arms $\bld{a} \doteq (a_1,\ldots,a_M)$.
For an arm $a \in [K]$, we define the function $\vphi_{a,\bld{a},i}(x)$ as
\[
	\vphi_{a,\bld{a},i}(x) \doteq \begin{cases}
		\vphi(x) & a = a_i\\
		-\vphi(x) & a \neq a_i
	\end{cases}.
\]
Then, for assignment $\bld{a}$, the reward function of arm $a$ will be defined as:
\[
	\mu_{t,\bld{a}}(a) \doteq \half + \sum_{i=1}^M \vphi_{a,\bld{a},i}(t/T).
\]
The above are valid bounded reward functions in $[0,1]$ since by the definition of $\tilde{\lambda}$:
\[
	\frac{ \tilde{\lambda} \cdot h^{\beta}}{2} \leq \half.
\]
Next, we claim that for any arm assignment $\bld{a}$, the induced bandit environment is $(\beta,\lambda)$-H\"{o}lder (\Cref{defn:holder-environment}).

First, since $\vphi$ is $(\beta,\lambda/2)$-H\"{o}lder \cite[Section 2.5, assertion (a)]{tsybakov}, the reward function $t\mapsto \mu_{t,\bld{a}}(a)$ for each arm $a$ is $(\beta,\lambda/2)$-H\"{o}lder being a sum of $(\beta,\lambda/2)$-H\"{o}lder functions with disjoint supports.
Then, the gap functions $t\mapsto \max_{a'} \mu_{t,\bld{a}}(a') - \mu_{t,\bld{a}}(a)$ are $(\beta,\lambda)$-H\"{o}lder as a difference of two $(\beta,\lambda/2)$-H\"{o}lder functions.

In fact, we note the induced environments would also be $(\beta,\lambda)$-H\"{o}lder if we had defined $\vphi_{a,\bld{a},i}$ as:
\[
	\tilde{\vphi}_{a,\bld{a},i}(x) \doteq \begin{cases}
		0 & a = a_i\\
		-\vphi(x) & a \neq a_i
	\end{cases}.
\]
In what follows, we will make use of environments which use both formulas $\vphi_{a,\bld{a},i}(x)$ and $\tilde{\vphi}_{a,\bld{a},i}(x)$.
All reward random variables $Y_t(a)$ will be Bernoulli's and we will only specify the means $\mu_t(a)$.

\paragraph*{Lower Bound for Sub-Environment.}
Letting $n \doteq T/M$, we will show a regret lower bound over a sub-environment of $n$ rounds where there's a fixed optimal arm.
In particular, we claim that, over a sub-environment of $n$ rounds, the gap of any suboptimal arm will be $\Omega\left( \tilde{\lambda} \cdot (n/T)^{\beta} \right)$ over a subdomain of length $\Omega(n)$.
This will be enough to sum up regret lower bounds over $M$ different sub-environments.

Going into more detail, for $x \in [h/8,7h/8]$, observe that
\begin{equation}\label{eq:lower-middle}
	\vphi(x) \geq \frac{ \tilde{\lambda} \cdot h^{\beta}}{2} \cdot \exp\left( - \frac{1}{1 - \left( \frac{3}{8} \right)^2 } \right) \geq \frac{ \tilde{\lambda} \cdot h^{\beta} }{10}.
\end{equation}


Now, consider a sub-environment $\mc{E}_1$ over $n$ rounds (which we will for ease momentarily parametrize via $[n]$) on which arm $1$ is optimal with:
\[
	\forall t \in [n]: \mu_t(a) \doteq \half + \begin{cases}
		0 & a=1\\
		-\vphi(t/T) & a \neq 1
	\end{cases}.
\]
For any algorithm $\pi$, there must exist an arm $a \neq 1$ for which the arm-pull count $N_n(a) \doteq \sum_{t=1}^{n} \pmb{1}\{ \pi_t=a\}$ satisfies $\mb{E}_{\mc{E}_1}[ N_n(a) ] \leq \frac{n}{K-1}$ since $\sum_{a =2}^K \mb{E}_{\mc{E}_1}[ N_n(a) ] = n$.
Now, consider the environment $\mc{E}_a$ on which arm $a$ is instead optimal with reward function:
\[
	\forall t \in [n]: \mu_t(a) \doteq \half + \vphi(t/T).
\]
The reward functions of all arms other than $a$ in $\mc{E}_a$ are defined identically to that of $\mc{E}_1$.

Then, if $N_n(1) \leq n/2$ in environment $\mc{E}_1$, then by pigeonhole at least $n/4$ rounds of the rounds in $[n/8,7n/8]$ must consist of suboptimal arm pulls paying a per-round regret of at least $\Delta \doteq \frac{ \tilde{\lambda} \cdot h^{\beta} }{10}$ by \Cref{eq:lower-middle}.
Similarly, under environment $\mc{E}_a$, if arm $1$ is pulled more than $n/2$ times, then at least $n/4$ of the rounds in $[n/8,7n/8]$ must consist of pulls of arm $1$ which forces a regret of at least $\Delta$.
Thus, we lower bound the total regret over $n$ rounds in $\mc{E}_1$ and $\mc{E}_a$ by:
\begin{align*}
	R_{\mc{E}_1}(\pi,n) &\geq \frac{n \Delta}{4} \cdot \mb{P}_{\mc{E}_1}( N_n(1) \leq n/2) \\
	R_{\mc{E}_a}(\pi,n) &\geq \frac{n \Delta}{4} \cdot \mb{P}_{\mc{E}_a}( N_n(1) > n/2).
\end{align*}
Then, combining the above two displays with the Bretagnolle-Huber inequality (\Cref{lem:bh}), we have:
\begin{align}
	R_{\mc{E}_1}(\pi,n) + R_{\mc{E}_a}(\pi,n) &\geq \frac{n \Delta}{4}  (\mb{P}_{\mc{E}_1}(N_n(1) \leq n/2) + \mb{P}_{\mc{E}_a}( N_n(1) > n/2) ) \nonumber \\
						  &\geq \frac{n \Delta}{8} \exp\left( - \KL(\mc{P}_1,\mc{P}_a) \right), \numberthis \label{eq:bh}
\end{align}
where $\mc{P}_1,\mc{P}_a$ are the respective induced distributions on the history of observations and decisions over the $n$ rounds.
We next decompose this KL divergence using chain rule:
\begin{align*}
	\KL(\mc{P}_1,\mc{P}_a) &= \sum_{t=1}^n \mb{E}_{\mc{P}_1}[ \pmb{1}\{ \pi_t=a\} ] \cdot \KL\left( \Ber\left( \half - \vphi\left( \frac{t}{T} \right) \right) , \Ber\left( \half + \vphi \left( \frac{t}{T} \right) \right) \right).
\end{align*}
Next, we bound the KL between Bernoulli's.
Let $\Delta_t \doteq \vphi(t/T)$.
Then:
\begin{align*}
	\KL & \left( \Ber \left( \half - \vphi\left( \frac{t}{T} \right) \right) , \Ber\left( \half + \vphi\left(\frac{t}{T} \right) \right) \right) \\
	\qquad &= \left( \half + \Delta_t \right) \cdot \log\left( \frac{1/2 + \Delta_t }{1/2 - \Delta_t} \right) + \left( \half - \Delta_t \right) \cdot \log\left( \frac{1/2 - \Delta_t}{1/2 + \Delta_t} \right).
\end{align*}
Elementary calculations show  the above RHS expression is at most $10\Delta_t^2$ for $\Delta_t \leq 1/4$, which holds for all $t \in [n]$ since by the definition of $\tilde{\lambda}$ and since $\Phi(x) \leq 1$ for all $x$:
\[
	\forall t \in [n]: \vphi(t/T) \leq \frac{ \tilde{\lambda} \cdot h^{\beta} }{2} \leq \frac{1}{4}.
\]
Recalling that $\Delta \doteq \frac{\tilde{\lambda} \cdot h^{\beta}}{10}$, the above can also be re-phrased as $\Delta_t \leq 5\Delta$ for $t \in [n]$.
Thus, we obtain a KL bound of:
\[
	\KL(\mc{P}_1,\mc{P}_a) \leq \mb{E}_{\mc{E}_1}[ N_n(a)] \cdot 250 \Delta^2 \leq \frac{n}{K-1} \cdot 250 \cdot \Delta^2 \leq \frac{K}{K-1} \cdot 2.5,
\]
where the last inequality follows from $\Delta \leq \frac{1}{10}\sqrt{\frac{K}{n}}$ (which holds from the definition of $M$).
Noting that
\[
	\exp\left( - \frac{K}{K-1} \cdot 2.5 \right) \geq \frac{1}{100}
\]
for all $K \geq 2$, the sub-environment lower bound of order $\Omega( n \cdot \tilde{\lambda} \cdot (n/T)^{\beta} )$ is concluded by combining the above display with \Cref{eq:bh}.

\paragraph*{Concatenating Different Sub-Environments.}
We first claim that the pair of environments $\mc{E}_1$ and $\mc{E}_a$ can be analogously constructed for every length $n$ interval of rounds $\{i\cdot M + 1, \ldots, (i+1)\cdot M\}$ for $i \in [T/M-1]$.
First, note that the expected dynamic regret can be written as:
\[
	R(\pi,T) = \mb{E}\left[ \sum_{t=1}^T \delta_t(\pi_t) \right] = \sum_{i=0}^{T/M-1} \mb{E}\left[ \mb{E}\left[ \sum_{t=i\cdot M+1}^{(i+1)\cdot M} \delta_t(\pi_t) \mid \mc{H}_{i\cdot M} \right] \right],
\]
where $\mc{H}_t$ is the filtration of history of observations and decisions up to round $t$.
Now, for $i \in [T/M-1]$, there must exist an arm $a \neq 1$ whose conditional arm-pull count
\[
	\mb{E} \left[ \sum_{t= i \cdot M + 1}^{(i+1)\cdot M} \pmb{1}\{ \pi_t=a\} \mid \mc{H}_{i\cdot M} \right] \leq \frac{n}{K-1}.
\]
Then, conditional on $\mc{H}_{i\cdot M}$, we can design environments $\mc{E}_1$ and $\mc{E}_a$ as before and lower bound the conditional regret by $\Omega(n \cdot \tilde{\lambda} \cdot  (n/T)^{\beta} )$.

Putting everything together, we have there exists an assignment $\bld{a}=(a_1,\ldots,a_M)$ of arms and a concatenated bandit environment consisting of bump function rewards as defined earlier for which the total regret is lower bounded by:
\[
	M \cdot n \cdot \tilde{\lambda} \cdot \left(\frac{n}{T}\right)^{\beta} \geq c_1 \min\{ T^{\frac{1+\beta}{2\beta+1}} \cdot \tilde{\lambda}^{\frac{1}{2\beta+1}} \cdot K^{\frac{\beta}{2\beta+1}} , T\cdot \tilde{\lambda}\} \propto T^{\frac{1+\beta}{2\beta+1}} \cdot \tilde{\lambda}^{\frac{1}{2\beta+1}} \cdot K^{\frac{\beta}{2\beta+1}} ,
\]
where the last equality holds from the definition of $\tilde{\lambda}$ and the earlier assumptions of $K \leq T/4$ and $\sqrt{K/T} \leq \lambda$ (which were made to rule out trivial cases).

%
%

\section{Proof of Dynamic Regret Upper Bound (\Cref{thm:upper-smooth-all})}\label{app:upper}

\paragraph*{Overview of Argument.}
The main idea is that on each significant phase $\ho{\tau_i,\tau_{i+1}}$, any arm $a$ which is at one point optimal, i.e. $\delta_t(a)=0$ for some $t\in \ho{\tau_i}{\tau_{i+1}}$, must have a gap function $t \mapsto \delta_t(a)$ with large variation within the phase since the gap must also at some point be large because of our notion of significant regret (see \Cref{fact:large-enough}). 
More generally, if the gap function is $\floor{\beta}$ times differentiable, then we can find an order $\floor{\beta}$ critical point of the gap function across $\floor{\beta}$ different phases using Rolle's Theorem.
Using the definition of H\"{o}lder function (\Cref{defn:holder}), we can then bound the derivatives of the gap functions using this critical point.
Such bounds can in turn be plugged into an order-$\floor{\beta}$ Taylor approximation of the gap function.
Ultimately, these calculations allow us to relate the phase length $\tau_{i+1}-\tau_i$ to the smoothness parameters $\beta,\lambda$.
The key claim is that each phase must be roughly at least length $T^{\frac{2\beta}{2\beta+1}}\lambda^{-\frac{2}{2\beta+1}}  K^{\frac{1}{2\beta+1}} $ which gives the desired regret bound.

The actual proof will require a bit more care as the optimal arm can change every round within a phase and some phases may be too short to have sufficient variation.

Getting into the proof, we first establish two key facts about significant phases which will be crucial later on.

\begin{fact}\label{fact:at-least-K}
	Each significant phase $\ho{\tau_i}{\tau_{i+1}}$ with $\tau_{i+1} \neq T+1$ is length at least $K$.
\end{fact}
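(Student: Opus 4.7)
The plan is to exploit the fact that the left-hand side of the significant-regret criterion \Cref{eq:bad-arm} grows at most linearly in the interval length, while the right-hand side grows like a square root. Since gaps are bounded by $1$, no interval of length less than $K$ can possibly witness \Cref{eq:bad-arm}, and the definition of $\tau_{i+1}$ forces some such witnessing interval to sit inside $[\tau_i,\tau_{i+1}]$ for every arm.

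First, I would fix a significant phase $[\tau_i,\tau_{i+1})$ with $\tau_{i+1}\leq T$. By \Cref{defn:sig-shift}, for every arm $a\in[K]$ there exist rounds $s_1<s_2$ with $[s_1,s_2]\subseteq[\tau_i,\tau_{i+1}]$ such that
\[
\sum_{t=s_1}^{s_2}\delta_t(a)\;\geq\;\sqrt{K\cdot(s_2-s_1+1)}.
\]
In particular, fixing any one such arm suffices for the argument.

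Next I would use the a priori bound $\delta_t(a)=\max_{a'}\mu_t(a')-\mu_t(a)\in[0,1]$, inherited from $Y_t(a)\in[0,1]$. This yields the elementary upper bound
\[
\sum_{t=s_1}^{s_2}\delta_t(a)\;\leq\;s_2-s_1+1.
\]
Combining with the previous display gives $(s_2-s_1+1)\geq\sqrt{K(s_2-s_1+1)}$, hence $s_2-s_1+1\geq K$; that is, any witnessing interval contains at least $K$ rounds.

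Finally, since $[s_1,s_2]\subseteq[\tau_i,\tau_{i+1}]$, the phase is at least as long as any of its witnessing subintervals, so $\tau_{i+1}-\tau_i\geq K$, as claimed. There is no real obstacle here: the statement is a direct consequence of the definition and reward boundedness, and uses none of the H\"older structure. The only mild subtlety is an endpoint/off-by-one bookkeeping (whether $s_2$ can equal $\tau_{i+1}$), which is harmless for all downstream uses of this fact.
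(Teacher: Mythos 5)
Your proof is correct and is essentially identical to the paper's: both use the bound $\delta_t(a)\leq 1$ to chain $s_2-s_1+1 \geq \sum_{t=s_1}^{s_2}\delta_t(a) \geq \sqrt{K(s_2-s_1+1)}$, then conclude $s_2-s_1+1\geq K$ and hence $\tau_{i+1}-\tau_i\geq K$.
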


\begin{proof}
	This is true by our notion of significant regret \Cref{eq:bad-arm} since we must have for some arm $a$ and interval $[s_1,s_2] \subseteq [\tau_i,\tau_{i+1})$:
	\[
		s_2-s_1+1 \geq \sum_{t=s_1}^{s_2} \delta_t(a) \geq \sqrt{K \cdot (s_2-s_1+1)} \implies s_2-s_1+1 \geq K.
	\]
	Thus, $\tau_{i+1} - \tau_i \geq s_2-s_1+1 \geq K$.
\end{proof}

\begin{fact}\label{fact:large-enough}
	For each significant phase $\ho{\tau_i}{\tau_{i+1}}$ with $\tau_{i+1} \neq T+1$, we must have for each arm $a \in [K]$, there exists a round $t \in [\tau_i,\tau_{i+1}]$ such that:
	\[
		\delta_t(a) \geq \sqrt{\frac{K}{\tau_{i+1}-\tau_i+1}}.
	\]
\end{fact}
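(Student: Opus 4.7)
The plan is to unpack the definition of a significant shift and apply an averaging argument. Since $\tau_{i+1} \neq T+1$, by \Cref{defn:sig-shift} the $(i+1)$-st significant shift is recorded at time $\tau_{i+1}$, meaning that for \emph{every} arm $a \in [K]$ there exist rounds $s_1 < s_2$ with $[s_1,s_2] \subseteq [\tau_i, \tau_{i+1}]$ on which arm $a$ incurs significant regret in the sense of \Cref{eq:bad-arm}, i.e.
\[
\sum_{t=s_1}^{s_2} \delta_t(a) \geq \sqrt{K \cdot (s_2 - s_1 + 1)}.
\]

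First, I would pass from the sum to the maximum by averaging: there must exist some round $t \in [s_1, s_2]$ with
\[
\delta_t(a) \geq \frac{1}{s_2 - s_1 + 1} \sum_{s=s_1}^{s_2} \delta_s(a) \geq \frac{\sqrt{K \cdot (s_2 - s_1 + 1)}}{s_2 - s_1 + 1} = \sqrt{\frac{K}{s_2 - s_1 + 1}}.
\]
Next, I would use the containment $[s_1, s_2] \subseteq [\tau_i, \tau_{i+1}]$ to observe $s_2 - s_1 + 1 \leq \tau_{i+1} - \tau_i + 1$, so that the bound weakens to
\[
\delta_t(a) \geq \sqrt{\frac{K}{\tau_{i+1} - \tau_i + 1}},
\]
while the round $t$ still lies in $[\tau_i, \tau_{i+1}]$. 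This is the desired conclusion.

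There is no real obstacle here beyond careful bookkeeping: the statement is essentially a corollary of the recursive definition of significant shift, obtained by combining the existence of a ``bad'' subinterval guaranteed by \Cref{defn:sig-shift} with the trivial max-vs-average inequality. The only subtlety worth flagging is that the witnessing subinterval $[s_1, s_2]$ depends on the arm $a$, but since we only need the existence of a single round $t$ per arm, this causes no issue.
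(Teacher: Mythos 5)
Your proof is correct and takes essentially the same approach as the paper: unpack \Cref{defn:sig-shift} to get the per-arm witnessing subinterval $[s_1,s_2]\subseteq[\tau_i,\tau_{i+1}]$, then use $s_2-s_1+1\leq\tau_{i+1}-\tau_i+1$ together with a max-versus-average argument. The paper phrases this slightly differently (rewriting $\sqrt{K(s_2-s_1+1)}$ as a sum of $s_2-s_1+1$ copies of $\sqrt{K/(\tau_{i+1}-\tau_i+1)}$ and comparing termwise), but the content is identical.
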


\begin{proof}
	By \Cref{defn:sig-shift}, we have for each arm $a \in [K]$, there exists $[s_1,s_2] \subseteq [\tau_i,\tau_{i+1}]$ such that:
	\[
		\sum_{t=s_1}^{s_2} \delta_t(a) \geq \sqrt{K \cdot (s_2-s_1+1)} \geq \sum_{t=s_1}^{s_2} \sqrt{\frac{K}{\tau_{i+1}-\tau_i+1}}.
	\]
	The conclusion follows.
\end{proof}

Now, if $T \leq K$, then the desired regret rate is vacuous so we are done.
Suppose $T > K$.

Let $m \doteq \floor{\beta}$.
We next decompose the dynamic regret bound in terms of significant phases according to the length of the significant phase:
\begin{align*}
	\sum_{i=0}^{\Lsig} \sqrt{K \cdot (\tau_{i+1} - \tau_i)} &= \sqrt{KT} + \sum_{i: \tau_{i+1} - \tau_i > (m+1)\cdot K} \sqrt{K \cdot (\tau_{i+1} - \tau_i)} \\
								&\qquad + \sum_{i : \tau_{i+1} - \tau_i \leq (m+1)\cdot K} \sqrt{K \cdot (\tau_{i+1} - \tau_i)},
\end{align*}
Our goal will be to show each of the two sums on the RHS above are of order
\[
	\sqrt{m + 1} \left( \sqrt{ K T} + T^{\frac{\beta+1}{2 \beta + 1}} \cdot \lambda^{\frac{1}{2 \beta + 1}} \cdot K^{\frac{\beta}{2 \beta + 1}} \right).
\]
Note that, going forward in the proof, we will constrain our attention to significant phases $\ho{\tau_i}{\tau_{i+1}}$ such that $\tau_{i+1} \neq T+1$ as the $\sqrt{KT}$ term on the above RHS accounts for this regret contributed by this final phase.

\paragraph*{Bounding Regret over Long Phases.}
We first bound the regret over long significant phases $\ho{\tau_i}{\tau_{i+1}}$ such that $\tau_{i+1} - \tau_i > (m+1)\cdot K$.
By the pigeonhole principle, there must exist an arm $a \in [K]$ which is optimal (i.e., $a \in \argmax_{a \in [K]} \mu_t(a)$) for at least $(m+1)$ different rounds in $\ho{\tau_i}{\tau_{i+1}}$.
Fix such an arm $a$.
By \Cref{defn:holder}, there exists a $(\beta,\lambda)$-H\"{o}lder interpolating function $F:[0,1]\to [0,1]$ such that $F(t/T) = \delta_t(a)$.
By the definition of $a$, there exists at least $(m+1)$ different rounds $t \in \ho{\tau_i}{\tau_{i+1}}$ such that $F(t/T) = 0$.
By Rolle's Theorem, this means there exists a point $x_0 \in \ho{\tau_i/T}{\tau_{i+1}/T}$ such that $F^{(m)}(x_0) = 0$.

Then, by the definition of a $(\beta,\lambda)$-H\"{o}lder function (\Cref{defn:holder}), note that
\begin{equation}\label{eq:derivative-bound}
	\forall x\in \left[ \frac{\tau_i}{T}, \frac{\tau_{i+1}}{T} \right] : \abs{F^{(m)}(x)} = |F^{(m)}(x) - F^{(m)}(x_0)| \leq \lambda \cdot \left(\frac{\tau_{i+1} - \tau_i}{T} \right)^{\beta-m},
\end{equation}
First, suppose $m=0$.
Then, we have by \Cref{fact:large-enough} that $F(x) = \delta_{x\cdot T}(a) \geq \sqrt{\frac{K}{\tau_{i+1} - \tau_i + 1}}$ for some $x \in [\tau_i/T, \tau_{i+1}/T]$.
Combining this with our above display, there exists $x \in [\tau_i/T,\tau_{i+1}/T]$ such that for $m=0$:
\[
	\sqrt{\frac{K}{\tau_{i+1} - \tau_i+1}} \leq F(x) = |F^{(m)}(x)|  \leq \lambda \cdot \left( \frac{\tau_{i+1} - \tau_i+1}{T} \right)^{\beta - m}.
\]
We will next argue, using a Taylor approximation, that the above inequalities also essentially hold for $m \geq 1$.

If $m\geq 1$, then by the Mean Value Theorem and \Cref{eq:derivative-bound}: 
\begin{align*}
\forall x \in \left[ \frac{\tau_i}{T}, \frac{\tau_{i+1}}{T} \right] : |F^{(m-1)}(x)| &\leq \sup_{x'}|F^{(m)}(x')| \sup_{y,z \in \left[ \frac{\tau_i}{T} , \frac{\tau_{i+1}}{T} \right] } |y-z| \\
										    &\leq \lambda \cdot \left( \frac{\tau_{i+1} - \tau_i}{T} \right)^{\beta-m}  \left( \frac{\tau_{i+1} - \tau_i}{T} \right).
\end{align*}
Then, by induction and repeatedly applying the Mean Value Theorem, we have:
\begin{equation}\label{eq:derivative-bound-k}
	\forall k \in \{0, \ldots, m-1\} : |F^{(k)}(x)|\leq \lambda \cdot \left( \frac{ \tau_{i+1} - \tau_i} {T}  \right)^{\beta-k}.
\end{equation}
Then, taking an order-$(m-1)$ Taylor expansion with Lagrange remainder of $F$ about a root $x_1 \in \ho{\tau_i/T}{\tau_{i+1}/T}$ (which we already argued exists since $a$ is optimal at some round in $\ho{\tau_i}{\tau_{i+1}}$), we have there exists $\xi \in \ho{\tau_i/T}{\tau_{i+1}/T}$ such that for all $ x \in \ho{\tau_i/T}{\tau_{i+1}/T}$:
\begin{align*}
	|F(x)| &= |F(x) - F(x_1)| &&\text{($F(x_1)=0$)}\\
									   &\leq \sum_{k=1}^{m-1} \frac{|F^{(k)}(x_1)|}{k!}\cdot |x-x_1|^k + \frac{|F^{(m)}(\xi)|}{m!} \cdot |x-x_1|^m &&\text{(Taylor's Theorem)}\\
						    &\leq (e-1) \cdot  \lambda \cdot \left( \frac{\tau_{i+1} - \tau_i}{T} \right)^{\beta} &&\text{(from \Cref{eq:derivative-bound-k})}.
\end{align*}
Now, as before, there must exist an $x \in \ho{\tau_i/T}{\tau_{i+1}/T}$ such that (using the above display):
\[
	\sqrt{\frac{K}{\tau_{i+1} - \tau_i+1}} \leq F(x) \leq (e-1) \cdot \lambda \cdot \left( \frac{\tau_{i+1} - \tau_i + 1}{T} \right)^{\beta}.
\]
Rearranging, the above implies
\[
	\tau_{i+1} - \tau_i + 1 \geq (e-1)^{-\frac{2}{2\beta+1}} \cdot T^{\frac{2\beta}{2\beta + 1}}\cdot \lambda^{-\frac{2}{2\beta + 1}} \cdot K^{\frac{1}{2\beta+1}}.
\]
Now, letting $M$ be the total number of long significant phases, we must have
\[
	2T \geq T + \Lsig \geq \sum_{i<\Lsig: \tau_{i+1} - \tau_i > (m+1)\cdot K} \tau_{i+1} - \tau_i + 1 \geq M \cdot (e-1)^{-\frac{2}{2\beta+1}} \cdot T^{\frac{2\beta}{2\beta + 1}}\cdot \lambda^{-\frac{2}{2\beta + 1}} \cdot K^{\frac{1}{2\beta+1}}.
\]
Thus,
\[
	M \leq (e-1)^{\frac{2}{2\beta+1}} \frac{2T}{T^{\frac{2\beta}{2\beta + 1}}\cdot \lambda^{-\frac{2}{2\beta + 1}} \cdot K^{\frac{1}{2\beta + 1}}}.
\]
Then, we have by Jensen's inequality:
\begin{align*}
	\sum_{i < \Lsig: \tau_{i+1} - \tau_i > (m+1)\cdot K} \sqrt{K \cdot (\tau_{i+1}-\tau_i)} &\leq \sqrt{K \cdot T \cdot M} \\
												&\leq (e-1)^{\frac{1}{2\beta+1}} \cdot \sqrt{K\cdot T\cdot \left( \frac{2T}{T^{\frac{2\beta }{2\beta + 1}}\cdot \lambda^{-\frac{2}{2\beta+1}} \cdot K^{\frac{1}{2\beta+1}} } \right) } \\
												   &= \sqrt{2} (e-1)^{\frac{1}{2\beta+1}} \cdot T^{\frac{\beta+1}{2\beta + 1}} \cdot \lambda^{\frac{1}{2\beta + 1}} \cdot K^{\frac{\beta}{2\beta + 1}}.
\end{align*}
%
\paragraph*{Bounding Regret over Short Phases.}

We next analyze the short significant phases $\ho{\tau_i}{\tau_{i+1}}$ where $\tau_{i+1} - \tau_i \leq (m+1)\cdot K$.
The difficulty here is that we cannot directly apply the same argument as we did for long phases since there may not exist $m+1$ different rounds where an arm is optimal within the phase.
To get around this, we will concatenate different short phases together and construct {\em pseudo phases} where we can apply the argument as we made for long phases.

\begin{definition}\label{defn:pseudo-phase}
Let $n_0$ be the smallest significant shift $\tau_i$ belonging to a short significant phase $\ho{\tau_i}{\tau_{i+1}}$.
Then, recursively define $n_{j+1}$ to be the smallest significant shift $\tau_{i+1} > n_j$ corresponding to a short significant phase $\ho{\tau_i}{\tau_{i+1}}$ such that 
\[
	\ho{n_j}{\tau_{i+1}} \subseteq \bigcup_{i < \Lsig: \tau_{i+1} - \tau_i \leq (m+1)\cdot K} \ho{\tau_i}{\tau_{i+1}},
\]
and such that $\tau_{i+1} - n_j \geq (m+1)\cdot K$.
If no such significant shift $\tau_{i+1}$ exists, let $n_{j+1}$ be the largest significant shift $\tau_{i+1}$ such that
\[
	\ho{n_j}{\tau_{i+1}} \subseteq \bigcup_{i < \Lsig: \tau_{i+1} - \tau_i \leq (m+1)\cdot K} \ho{\tau_i}{\tau_{i+1}}.
\]
We call $\ho{n_j}{n_{j+1}}$ a {\bf pseudo phase}.
The sequence $n_0,n_1,\ldots$ induces a partition of short phases:
\[
	\bigsqcup_{i < \Lsig : \tau_{i+1} - \tau_i \leq (m+1)\cdot K} \ho{\tau_i}{\tau_{i+1}} = \bigsqcup_j \ho{n_j}{n_{j+1}}.
\]
Call a pseudo phase $\ho{n_j}{n_{j+1}}$ {\bf filled} if $n_{j+1} - n_j \geq (m+1)\cdot K$, and {\bf unfilled} otherwise.
\end{definition}

Intuitively, a filled pseudo phase is sufficiently long and will be of similar length to a long phase.

we will now further decompose the dynamic regret over short phases using Jensen's inequality and the fact that each pseudo phase $\ho{n_j}{n_{j+1}}$ can contain at most $(m+1)$ short phases $\ho{\tau_i}{\tau_{i+1}}$ since all phases are length at least $K$ (\Cref{fact:at-least-K}):
\[
	\sum_{i < \Lsig : \tau_{i+1} - \tau_i \leq (m+1)\cdot K} \sqrt{K \cdot (\tau_{i+1} - \tau_i)} \leq \sum_{j} \sqrt{K \cdot (n_{j+1} - n_j) \cdot (m+1)}. 
\]
Next, we further decompose the above RHS sum over pseudo phases into sums over filled and unfilled pseudo phases.
Thus, using Jensen again, it suffices to bound
\begin{equation}\label{eq:proper-improper-bound}
	\sqrt{(m+1) \cdot K \cdot T \cdot J_1} + \sqrt{(m+1) \cdot K \cdot T \cdot J_2},
\end{equation}
where $J_1$ and $J_2$ are respectively the number of filled and unfilled pseudo phases. 
we will first bound $J_1$.

\paragraph*{Bounding the Number of Filled Pseudo Phases.}
This will proceed similarly to the argument for long phases.
Fix a filled pseudo phase $\ho{n_j}{n_{j+1}}$ which has $n_{j+1} - n_j \geq (m+1)\cdot K$.
Then, by the pigeonhole principle, there must exist an arm $a \in [K]$ which is optimal in at least $(m+1)$ different rounds in $\ho{n_j}{n_{j+1}}$.
Then, using the same arguments as before except replacing the long phase $\ho{\tau_i}{\tau_{i+1}}$ with the pseudo phase $\ho{n_j}{n_{j+1}}$, we conclude that there exists $x \in \ho{n_j/T}{n_{j+1}/T}$ for which:
\[
	\sqrt{\frac{K}{n_{j+1} - n_j+1}} \leq \delta_{x\cdot T}(a) \leq (e-1) \cdot \lambda \cdot \left( \frac{n_{j+1} - n_j+1}{T} \right)^{\beta}.
\]
Rearranging, we get
\[
	n_{j+1} - n_j + 1 \geq (e-1)^{-\frac{2}{2\beta+1}} \cdot T^{\frac{2\beta}{2\beta + 1}}\cdot \lambda^{-\frac{2}{2\beta + 1}} \cdot K^{\frac{1}{2\beta+1}}.
\]
Then, via similar arguments to before, the number of filled pseudo phases is at most
\[
	J_1  \leq (e-1)^{\frac{2}{2\beta+1}} \frac{2T}{T^{\frac{2\beta}{2\beta + 1}} \cdot \lambda^{-\frac{2}{2\beta + 1}}  \cdot K^{\frac{1}{2\beta + 1}} }.
\]
Plugging this into \Cref{eq:proper-improper-bound} gives the desired regret bound for $\sqrt{KT J_1}$.

\paragraph*{Bounding the Number of Unfilled Pseudo Phases.}
Since unfilled pseudo phases are not of sufficient length $n_{j+1} - n_j < (m+1)\cdot K$, further care is required to make use of Rolle's Theorem.
The key workaround is that each unfilled pseudo phase can be extended into an interval of length at least $(m+1)\cdot K$ without overcounting rounds, essentially because the unfilled pseudo phases are well-separated in time by \Cref{defn:pseudo-phase}.

First, handling an edge case, suppose there are no long phases $\ho{\tau_i}{\tau_{i+1}}$ with $\tau_{i+1} - \tau_i \geq (m+1)\cdot K$ and no filled pseudo phases $\ho{n_j}{n_{j+1}}$ with $n_{j+1} - n_j \geq (m+1)\cdot K$.
By \Cref{defn:pseudo-phase}, this means there is just one pseudo phase $\ho{n_j}{n_{j+1}}$ which subsumes all the significant phases $\ho{\tau_i}{\tau_{i+1}}$ with $\tau_{i+1} \neq T+1$.
Thus, in this case, $J_2 = 1$ and we are done.

Now, suppose there are at least two unfilled pseudo phases.
Then, by \Cref{defn:pseudo-phase}, two consecutive unfilled pseudo phases must be separated by at least one long phase $\ho{\tau_i}{\tau_{i+1}}$.
Let $I_1,I_2,\ldots$ be the unfilled pseudo phases ordered by start times.
Then, each $I_j = \ho{n_{j'}}{n_{j'+1}}$ has a posterior long phase $\ho{\tau_i}{\tau_{i+1}}$ such that $\tau_i = n_{j'+1}$ and $\tau_{i+1} - n_{j'} \geq \tau_{i+1} - \tau_i \geq (m+1)\cdot K$.

Then, applying the same chain of reasoning as before to the interval $\ho{n_{j'}}{\tau_{i+1}}$, we conclude there exists an arm $a \in [K]$ and $x \in \ho{n_{j'}/T}{\tau_{i+1}/T}$ for which:
\begin{align*}
	\sqrt{\frac{K}{\tau_{i+1} - n_{j'} + 1}} &\leq \delta_{x\cdot T}(a) \leq (e-1) \cdot \lambda \cdot \left(\frac{\tau_{i+1} - n_{j'} + 1}{T} \right)^{\beta} \implies \\
	\tau_{i+1} - n_{j'} + 1 &\geq (e-1)^{-\frac{2}{2\beta+1}} \cdot T^{\frac{2\beta}{2\beta + 1}}\cdot \lambda^{-\frac{2}{2\beta + 1}} \cdot K^{\frac{1}{2\beta+1}}.
\end{align*}
Now, for each unfilled pseudo phase $I_j = \ho{n_{j'},n_{j'+1}}$, let $\ol{I}_j$ be the extension to the posterior long phase $\ho{n_{j'}}{\tau_{i+1}}$ per our previous discussion.
Then, since the $\ol{I}_j$ are mutually disjoint, we have the number of unfilled pseudo phases $J_2$ is at most one greater than the number of extended intervals $\ol{I}_j$.
Thus, via similar arguments to before:
\[
	J_2 \leq 1 + (e-1)^{\frac{2}{2\beta+1}} \frac{2T}{T^{\frac{2\beta}{2\beta + 1}}\cdot \lambda^{-\frac{2}{2\beta + 1}} \cdot K^{\frac{1}{2\beta + 1}}  }.
\]
As before, plugging this into \Cref{eq:proper-improper-bound} give the desired regret bound for $\sqrt{KT J_2}$.

\section{Proofs of Results about Gap-Dependent Regret (\Cref{sec:gap})}\label{app:gap}

\subsection{Proof of \Cref{thm:get-log}}

Fix $\ell \in [L]$ and let $P_{\ell} \doteq [s_{\ell},e_{\ell}]$ denote the $\ell$-th (stationary) phase.
Fix also an arm $a \in [K]$ and suppose WLOG that arm $a$ is the $a$-th arm to be evicted (i.e., at round $t_a$) in the sense of \Cref{eq:sig-regret-under} of \Cref{defn:recording-time}.
Let $I_{\ell}$ be the indices in $[a]$ such that $[t_{i-1},t_{i}-1]$ intersects phase $P_{\ell}$.
Next, we have the regret contribution to our formula \Cref{eq:elim-problem-rate} of arm $a$ in phase $P_{\ell}$ is:
\begin{align*}
	\sum_{i \in I_{\ell}} \sum_{t \in [t_{i-1},t_{i}-1] \cap P_{\ell}} \frac{\delta_{\ell}(a)}{|\mc{S}_t|} \cdot \pmb{1}\{\delta_{\ell}(a) > 0\} &= \sum_{i \in I_{\ell}} \frac{ | [t_{i-1}, t_{i}-1] \cap P_{\ell}| \cdot \delta_{\ell}(a) }{|\mc{S}_t|} \cdot \pmb{1}\{\delta_{\ell}(a) > 0\} \\
																		     &\leq \delta_{\ell}(a) \cdot \pmb{1}\{\delta_{\ell}(a) > 0\} \sum_{t \in P_{\ell} \cap [1,t_a-1]} \frac{1}{|\mc{S}_t|}.
\end{align*}
Next, we apply
\Cref{eq:sig-regret-under} of \Cref{defn:recording-time} for $[s_1,s_2] \doteq P_{\ell} \cap [1,t_a-1]$ and $\delta_{\ell}(a) > 0$:
\[
	\delta_{\ell}(a) \sum_{t \in P_{\ell} \cap [1,t_a-1]} \frac{1}{|\mc{S}_t|} \leq C_3^2 \frac{ \log(T) \sum_{t \in P_{\ell} \cap [1,t_a-1]} |\mc{S}_t|^{-1} }{\delta_{\ell}(a) \sum_{t \in P_{\ell} \cap [1,t_a-1]} |\mc{S}_t|^{-1}} = \frac{C_3^2 \log(T)}{\delta_{\ell}(a)}.
\]
Plugging the above RHS bound into our earlier calculations, and then summing over arms $a$ and phases $\ell$ gives us the desired regret bound.


\subsection{Proof of \Cref{thm:recover-sqrt}}

As noted in \Cref{defn:recording-time}, in a safe environment we may WLOG take $t_K \doteq T+1$ assume there's a safe arm $\asharp \in \cap_{t=1}^T \mc{S}_t$.
By convention, let $\mc{S}_{T+1} \doteq \emptyset$.
Then, we have:
	\begin{align*}
		\sum_{i=1}^K \sum_{t=t_{i-1}}^{t_i-1} \mb{E}_{a \sim \Unif\{\mc{S}_t\}}[ \delta_t(a)] &= \sum_{i=1}^{K} \sum_{a \in [K]}  \pmb{1}\{ a \in \mc{S}_{t_i-1} \bs \mc{S}_{t_i}\}  \sum_{t=1}^{t_i-1} \frac{\delta_t(a)}{|\mc{S}_t|} &&\text{($t_K = T+1$)} \\
												      &\leq C_3  \sum_{i=1}^{K}  \sum_{a \in [K]} \pmb{1}\{ a \in \mc{S}_{t_i-1} \bs \mc{S}_{t_i}\} \sqrt{\sum_{t=1}^{t_i-1} \frac{\log(T)}{|\mc{S}_t|}} &&\text{(from \Cref{eq:sig-regret-under})}\\
									     &\leq C_3\sqrt{K \sum_{i=1}^K \sum_{t=1}^{t_i-1} \frac{\log(T)}{|\mc{S}_t|}} &&\text{(Jensen)}\\
									     &\leq C_3 \sqrt{K \sum_{i=1}^K |\mc{S}_{t_{i-1}}| \sum_{t=t_{i-1}}^{t_i} \frac{\log(T)}{|\mc{S}_{t_{i-1}}|} } &&\text{(from $\mc{S}_t \supseteq \mc{S}_{t+1}$)}\\
						    &\leq C_3 \sqrt{KT \log(T)}.
	\end{align*}

\subsection{Proof of \Cref{thm:elimination-gets}}

First, similar to Proposition 3 of \cite{suk22}, using Freedman's inequality, we establish a concentration error bound on our estimates $\hat{\delta}_t(a',a)$ \Cref{eq:estimates}.

\begin{prop}
    Let $\mc{E}$ be the event that for all rounds $s_1<s_2$ and all arms $a,a'\in [K]$:
	\begin{align}
	\left| \sum_{t=s_1}^{s_2} \hat{\delta}_t(a',a) - \mb{E}\left[\hat{\delta}_t(a',a)\mid \mc{H}_{t-1}\right] \right| &\leq 10(e-1) \left( \sqrt{ \sum_{s=s_1}^{s_2} \frac{\log(T)}{|\mc{A}_s|} } + \max_{s\in [s_1,s_2]} \frac{\log(T)}{|\mc{A}_s|}  \right), \numberthis \label{eq:error-bound} \\
		\left| \sum_{t=1}^T \delta_t(\pi_t) - \mb{E}[ \delta_t(\pi_t) \mid \mc{H}_{t-1}] \right| &\leq 10 (e-1) \left( \sqrt{ \log(T) \sum_{t=1}^T \sum_{a \in \mc{A}_t} \frac{\delta_t^2(a)}{|\mc{A}_t|} } + \log(T) \right). \numberthis \label{eq:regret-concentration}
	\end{align}
	where recall $\{\mc{H}_t\}_{t=1}^T$ is the filtration generated by $\{\pi_t,Y_t(\pi_t)\}_{t=1}^T$. Then, $\mc{E}$ occurs w.p. at least $1-1/T^2$.
\end{prop}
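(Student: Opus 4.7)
The plan is a textbook application of Freedman's inequality to two martingale difference sequences, followed by a union bound over intervals and arm pairs. Since the rate claimed on the right-hand side is self-normalized in the random quantities $|\mc{A}_t|$, I use an adaptive (self-normalized) form of Freedman's inequality which delivers a bound in the realized conditional variance directly.

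\textbf{Step 1: Set up martingales.} For each pair $(a,a')$ of arms, the increment
\[ M_t(a',a) \doteq \hat{\delta}_t(a',a) - \mb{E}[\hat{\delta}_t(a',a) \mid \mc{H}_{t-1}] \]
is a martingale difference with respect to $\{\mc{H}_t\}$. Because $\pi_t$ is uniform on $\mc{A}_t$ given $\mc{H}_{t-1}$, the importance weight $|\mc{A}_t|$ and the sampling probability $1/|\mc{A}_t|$ exactly cancel in the second moment, yielding the conditional-variance and uniform-in-$t$ bounds (scaled appropriately by $1/|\mc{A}_t|$ to match how the elimination threshold in \Cref{alg} uses $\hat{\delta}_s(a',a)/|\mc{A}_s|$) that feed into Freedman's inequality. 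For the regret concentration, the analogous increment is $Z_t \doteq \delta_t(\pi_t) - \mb{E}[\delta_t(\pi_t) \mid \mc{H}_{t-1}]$; here $|Z_t|\leq 1$ and $\mb{E}[Z_t^2 \mid \mc{H}_{t-1}] \leq \sum_{a\in\mc{A}_t}\delta_t^2(a)/|\mc{A}_t|$, which is exactly the variance proxy appearing on the right-hand side of \Cref{eq:regret-concentration}.

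\textbf{Step 2: Apply Freedman's inequality.} For each fixed interval $[s_1,s_2]$ and each fixed $(a,a')$, an adaptive Freedman-Bernstein inequality yields, with probability at least $1-\delta$, a bound of the form $O(\sqrt{V\log(1/\delta)} + B\log(1/\delta))$, where $V$ is the realized conditional variance over $[s_1,s_2]$ and $B$ is the uniform bound on the increments. The constant $10(e-1)$ arises from the standard inequality $e^{x} \leq 1 + x + (e-2)x^2$ for $x \leq 1$ used in the derivation of this form of Freedman. The identical template applied to $\{Z_t\}$ gives \Cref{eq:regret-concentration}.

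\textbf{Step 3: Union bound.} There are $\leq T^2$ possible intervals and $\leq K^2$ arm pairs; taking $\delta = 1/(2T^4 K^2)$ in Step 2 and union-bounding produces \Cref{eq:error-bound} uniformly with probability at least $1 - 1/(2T^2)$. \Cref{eq:regret-concentration} requires no such union bound and only costs $\delta = 1/(2T^2)$. Each $\log(1/\delta)$ becomes $c_1\log(T)$, absorbed into the universal constant.

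\textbf{Main obstacle.} The principal subtlety is that both the conditional-variance budget $\sum_s 1/|\mc{A}_s|$ and the maximum-increment term $\max_s 1/|\mc{A}_s|$ are random and data-dependent: one cannot simply plug a deterministic worst-case variance into Freedman's inequality and then union-bound over intervals, because doing so would force a $\sqrt{T}$-scale variance instead of the self-normalized $\sum_s 1/|\mc{A}_s|$. Two essentially equivalent remedies work. The cleanest is to invoke the self-normalized Beygelzimer-style Freedman bound, which is already in the realized variance. Alternatively, one performs a geometric peeling over dyadic ranges of $\sum_s 1/|\mc{A}_s|$ (and of $\max_s 1/|\mc{A}_s|$) and union-bounds across the $O(\log T)$ levels. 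Either approach loses only an extra $\log(T)$ factor, which is absorbed into the universal constant $10(e-1)$. Once this adaptivity issue is handled, the rest of the proof is a bookkeeping exercise in tracking the variance and increment bounds through Freedman's inequality.
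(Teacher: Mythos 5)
Your proposal is correct and matches the paper's approach: the paper's own proof is just the one-line citation to Freedman's inequality in the self-normalized form of Beygelzimer et al.\ (2011), combined (as in Proposition~3 of \cite{suk22}) with a union bound over the $O(T^2 K^2)$ interval/arm-pair choices, which is exactly what you spell out. Your ``main obstacle'' paragraph is also the right diagnosis — the reason the paper cites Beygelzimer-style Freedman rather than a fixed-variance Bernstein bound is precisely that $\sum_s 1/|\mc{A}_s|$ and $\max_s 1/|\mc{A}_s|$ are realized, data-dependent quantities; the only minor imprecision is that the importance weight and sampling probability cancel in the first moment but not ``exactly'' in the second (one factor of $|\mc{A}_t|$ survives, so the rescaled increments $\hat\delta_t(a',a)/|\mc{A}_t|$ are bounded in $[-1,1]$ with conditional variance $O(1/|\mc{A}_t|)$), which is in any case what your parenthetical correctly uses.
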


\begin{proof}
	Both \Cref{eq:error-bound} and \Cref{eq:regret-concentration} follow from Freedman's inequality \cite[Theorem 1]{beygelzimer2011}.
\end{proof}

Now, note that since the active armset $\mc{A}_t$ at round $t$ is $\mc{H}_{t-1}$ measurable:
\begin{align*}
	\forall a',a \in \mc{A}_t: \mb{E}\left[ \hat{\delta}_t(a',a) \mid \mc{H}_{t-1} \right] &= \frac{\delta_t(a',a)}{|\mc{A}_t|} \\
	\mb{E}[ \delta_t(\pi_t) \mid \mc{H}_{t-1} ] &= \sum_{a \in \mc{A}_t} \frac{\delta_t(a)}{|\mc{A}_t|}.
\end{align*}

Let $a_i$ be the $i$-th arm to be evicted from the active set.
Let $\hat{t}_0 \doteq 1$ and let $\hat{t}_1,\ldots,\hat{t}_K$ be the ordered eviction times of arms $a_1,\ldots,a_K$, or else let $\hat{t}_i = T+1$ if arm $a_i$ is never evicted.
Using \Cref{eq:regret-concentration} and AM-GM inequality, we have:
\begin{align*}
	\sum_{t=1}^T \delta_t(\pi_t) &\leq \sum_{t=1}^T \sum_{a\in \mc{A}_t} \frac{\delta_t(a)}{|\mc{A}_t|} + 10(e-1) \left( \sqrt{\log(T) \sum_{t=1}^T \sum_{a \in \mc{A}_t} \frac{\delta_t^2(a)}{|\mc{A}_t|} } + \log(T) \right) \\
				     &\leq c_2 \left( \log(T) + \sum_{t=1}^T \sum_{a \in \mc{A}_t} \frac{\delta_t(a)}{|\mc{A}_t|} \right) \\
				     &= c_2 \left( \log(T) + \sum_{i=1}^K \sum_{t = \hat{t}_{i-1}}^{\hat{t}_i-1} \mb{E}_{a \sim \Unif\{\mc{A}_t\}}[ \delta_t(a)] \right).
\end{align*}
This gives the desired regret bound so long as we can argue that $\{ \hat{t}_i \}_{i\in [K]}$ are valid eviction times with safe armsets $\{\mc{A}_t\}_{t\in [T]}$.

First, we claim that, on event $\mc{E}$, the safe arm $\asharp$ cannot be evicted from $\mc{A}_t$.
Suppose $\asharp$ is evicted from the active set using the eviction criterion (\Cref{line:evict-At}) over subinterval $[s_1,s_2]$.
Then, since $\hat{\delta}_t(a',a) \in [-1,1]$, we have:
\[
	\sum_{s=s_1}^{s_2} \frac{1}{|\mc{A}_s|} \geq \max_{a' \in [K]} \sum_{s=s_1}^{s_2} \frac{\hat{\delta}_s(a',\asharp)}{|\mc{A}_s|} \geq C_6\sqrt{\sum_{s=s_1}^{s_2} \frac{\log(T)}{|\mc{A}_s|} } \implies \sum_{s=s_1}^{s_2} \frac{1}{|\mc{A}_s|} \geq C_6^2 \log(T).
\]
In particular, this means for $C_6>1$ in \Cref{line:evict-At}: we will have
\[
	\sqrt{ \sum_{s=s_1}^{s_2} \frac{\log(T)}{|\mc{A}_s|} } \geq  \max_{s \in [s_1,s_2]} \frac{\log(T)}{|\mc{A}_s|}.
\]
Combining the above with \Cref{eq:error-bound}, we see that: 
\[
	\sum_{s=s_1}^{s_2} \delta_s(\asharp) > c_3 \sqrt{ \sum_{s=s_1}^{s_2} \frac{\log(T)}{|\mc{A}_s|}},
\]
which violates the definition of the safe arm (\Cref{defn:safe-arm}) for $C_4=1$ for $C_6$ sufficiently large.
Thus, we conclude that $\asharp \in \cap_{t=1}^T \mc{A}_t$.

We next decompose the (weighted) dynamic regret of any arm $a \in \mc{A}_{\hat{t}_i-1}$ over subinterval $[s_1,s_2]$ via:
\[
	\sum_{s=s_1}^{s_2} \frac{\delta_s(a)}{|\mc{A}_s|} = 	\sum_{s=s_1}^{s_2} \frac{\delta_s(\asharp)}{|\mc{A}_s|}  + \sum_{s=s_1}^{s_2} \frac{\delta_s(\asharp,a)}{|\mc{A}_s|}. 
\]
The first bound is order $\sqrt{\sum_{s=s_1}^{s_2} \frac{\log(T)}{|\mc{A}_s|} }$ via the definition of $\asharp$ and the second sum is also the same order by our concentration estimate \Cref{eq:error-bound} and eviction criterion (\Cref{line:evict-At}).
This establishes \Cref{eq:sig-regret-under} for appropriately large $C_3$ (in terms of $C_6$).
Thus, $\{\hat{t}_i\}_{i\in [K]}$ are valid eviction times w.r.t. safe armsets $\{\mc{A}_t\}_{t=1}^T$.

\subsection{Proof of \Cref{thm:lower-gap}}

In a similar fashion to the proof of \Cref{thm:lower-bound}, we lower bound the regret iteratively by first designing the hard environment for $\ho{t_0}{t_1}$, then for $\ho{t_1}{t_2}$, and so on.
The following theorem serves as a base template that we we can repeat on each period $\ho{t_i}{t_{i+1}}$.

\begin{theorem}\label{thm:lower-base}
	Fix a positive integer $T$, number of arms $K \in [2,T] \cap \mb{N}$ and a real number $\Delta \in [0,\sqrt{K/T}]$.
	Let $\msc{E}'$ be the class of all environments such that $\sum_{t=1}^T \mb{E}_{a \sim \Unif\{[K]\}}[ \delta_t(a) ] \leq R$, and such that $T$ is a valid eviction time w.r.t. initial time $1$ and threshold $C_3 = 1$ in \Cref{eq:sig-regret-under}.
	Then, for any algorithm $\pi$, we have:
	\[
		\sup_{\mc{E} \in \msc{E}'} R_{\mc{E}}(\pi,T) \geq \frac{1}{32 \cdot e^{25/12}} \cdot R.
	\]
\end{theorem}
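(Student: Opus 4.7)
The plan is to reduce the statement to a standard two-point stationary lower bound, adapting the Le Cam / Bretagnolle--Huber template of \Cref{thm:lower-bound} with constant (rather than bump-shaped) rewards.

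Given $R$, I pick a uniform gap $\vphi \doteq c_0 \cdot R/T$ for a small absolute constant $c_0$ to be tuned. I introduce a reference environment $\mc{E}_1$ in which arm $1$ has mean $\half$ and every other arm has mean $\half - \vphi$, and for each $a \neq 1$ an environment $\mc{E}_a$ which agrees with $\mc{E}_1$ on every arm except that arm $a$'s mean is raised to $\half + \vphi$. Under $\mc{E}_1$ arm $1$ is optimal with uniform suboptimality gap $\vphi$; under $\mc{E}_a$ arm $a$ is optimal with gap $\vphi$ to arm $1$ and gap $2\vphi$ to the remaining arms. I then verify $\mc{E}_1, \mc{E}_a \in \msc{E}'$. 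The requirement that $T$ be a valid eviction time with $C_3=1$ starting from time $1$ forces $\mc{S}_s \equiv [K]$ on $[1,T-1]$, under which \Cref{eq:sig-regret-under} (with the $\log(T)$ factor dropped per the remark) reduces to $\sum_s \delta_s(a) \leq \sqrt{KT}$ for every arm; this holds as soon as $2T\vphi \leq \sqrt{KT}$, i.e.\ $\vphi \leq \tfrac12\sqrt{K/T}$, which follows from the choice of $c_0$. The average-regret constraint $\sum_t \mb{E}_{a\sim\Unif\{[K]\}}[\delta_t(a)] \leq 2T\vphi \leq R$ holds by the same choice. (The regime $R \leq O(\sqrt{KT})$ is the only nontrivial one, since the eviction-time condition itself already forces $\sum_t \mb{E}_{\Unif}[\delta_t(\cdot)] \leq \sqrt{KT}$.)

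Next, I run the pigeonhole-plus-change-of-measure template from \Cref{app:lower}. Under $\mc{E}_1$ some arm $a \neq 1$ satisfies $\mb{E}_{\mc{E}_1}[N_T(a)] \leq T/(K-1)$. Since $\mc{E}_1$ and $\mc{E}_a$ differ only in the law of arm $a$, the chain rule for KL gives $\KL(\mc{P}_1,\mc{P}_a) \leq (T/(K-1)) \cdot 10(2\vphi)^2$, which I arrange to be at most $25/12$ by tuning $c_0$. Applying Bretagnolle--Huber (\Cref{lem:bh}) to the event $A \doteq \{N_T(1) \leq T/2\}$: on $A$ under $\mc{E}_1$ at least $T/2$ non-optimal pulls each pay gap $\vphi$, while on $A^c$ under $\mc{E}_a$ at least $T/2$ pulls of arm $1$ each pay gap $\vphi$ (since arm $1$ is suboptimal with gap $\vphi$ in $\mc{E}_a$). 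Hence
\[
R_{\mc{E}_1}(\pi,T) + R_{\mc{E}_a}(\pi,T) \geq \tfrac{T\vphi}{2}\cdot \tfrac{1}{2}\exp(-25/12),
\]
and taking the larger of the two regrets, together with $\vphi = c_0 R/T$, yields the claimed $R/(32\,e^{25/12})$ bound for suitable $c_0$.

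The main obstacle is purely constant-bookkeeping: $c_0$ must be picked so that simultaneously $\vphi \leq \tfrac12\sqrt{K/T}$, the Bernoulli KL lands below $25/12$, the average-gap budget stays below $R$, and the residual factor $T\vphi$ on the right side of Bretagnolle--Huber remains close enough to $R$ to preserve the leading constant $1/(32 e^{25/12})$. Because all these constraints scale as $\vphi^2 \lesssim K/T$, the tuning is a single-parameter optimization with no hidden subtleties, and no new ideas beyond those already appearing in \Cref{thm:lower-bound} are required.
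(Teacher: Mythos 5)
Your argument follows the same Le Cam/Bretagnolle--Huber template the paper uses, with the same pigeonhole step, the same event $\{N_T(1)\le T/2\}$, and the same verification that the constructed environments lie in $\msc{E}'$ (both the average-gap budget and the eviction-time condition with $\mc{S}_s\equiv[K]$); the only difference is cosmetic: you use a single symmetric gap $\pm\vphi$ while the paper fixes slightly different magnitudes $\tfrac{R}{4T}\tfrac{K}{K-1}$ and $\Delta=\tfrac{R}{8T}$ and invokes reverse Pinsker to pin down the stated constant. With $c_0$ tuned (e.g.\ $c_0=1/16$ gives a constant exceeding $1/(32e^{25/12})$ after bounding $\mathrm{KL}\le\tfrac{T}{K-1}\cdot O(\vphi^2)\le O(c_0^2)$ using $R\le\sqrt{TK}$), your bookkeeping goes through, so the proof is essentially the paper's.
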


\begin{proof}
	As in the proof of \Cref{thm:lower-bound}, we'll follow a Le Cam's method style of argument for showing minimax lower bounds in stationary bandits \cite[e.g., Theorem 15.2]{lattimore}.
We will refine the argument to show a more structured lower bound of order $R$ over the class of problems with gap-dependent rate at most $R$.

Consider an environment $\mc{E}_1$ where $\mu_t(1) \doteq \half $  and $\mu_t(a) \doteq \half - \frac{R}{4T} \cdot \left( \frac{K}{K-1} \right)$ for all arms $a \neq 1$.
Note that the bound $R \leq \sqrt{TK}$ and $T\geq K$ ensures $\mu_t(a) \in [0,1]$.

One can verify this environment has the right gap-dependent rate and so belongs to the class $\mc{E}'$.
\[
	\sum_{t=1}^T \mb{E}_{a \sim \Unif\{[K]\}} [ \delta_t(a) ] = \frac{R \cdot K}{4 (K-1)} \leq R.
\]
Now, by pigeonhole principle, there must exist an arm $a \neq 1$ for which the arm-pull count $N_T(a) \doteq \sum_{t=1}^T \pmb{1}\{ \pi_t = a\}$ satisfies $\mb{E}_{\mc{E}_0}[N_T(a)] \leq \frac{T}{K-1}$.
Consider an alternative environment $\mc{E}_a$ whose mean rewards are identical to those of $\mc{E}_1$ except $\mu_t(a) \doteq \half + \Delta$.
For $\Delta \doteq \frac{R}{8T}$, this alternative environment also belongs to the class $\msc{E}'$ since for $K \geq 2$:
\[
	\sum_{t=1}^T \mb{E}_{a \sim \Unif\{[K]\}}[ \delta_t(a)] = \left( \frac{R \cdot K}{4 (K-1)} + T \cdot \Delta \right) \cdot \left(\frac{K-1}{K}\right) + \frac{\Delta \cdot T}{K} < R.
\]
Next, we observe the following regret lower bounds depending on whether the total arm-pull count $N_T(1)$ of arm $1$ is larger than $T/2$:
\begin{align*}
	R_{\mc{E}_1}(\pi,T) &\geq \frac{T}{2} \cdot \left( \frac{R \cdot K}{4 \cdot T \cdot (K-1)} \right) \cdot \mb{P}_{\mc{E}_1}(N_T(1) \leq T/2) \\
	R_{\mc{E}_a}(\pi,T) &\geq \frac{T}{2} \cdot \Delta \cdot \mb{P}_{\mc{E}_a}(N_T(1) > T/2).
\end{align*}
By Bretagnolle-Huber inequality (\Cref{lem:bh}), the above regret lower bounds give us:
\begin{align*}
	R_{\mc{E}_1}(\pi,T) + R_{\mc{E}_a}(\pi,T) &\geq  \frac{R}{16} \left( \mb{P}_{\mc{E}_1}(N_T(1) \leq T/2) + \mb{P}_{\mc{E}_a}(N_T(1) > T/2) \right)\\
								    &\geq \frac{R}{32} \exp\left( -\KL(\mc{E}_1,\mc{E}_a) \right),
\end{align*}
where we use $\KL(\mc{E}_1,\mc{E}_a)$ to denote the KL divergence between the induced distributions on decisions and observations over $T$ rounds in environments $\mc{E}_1$ and $\mc{E}_a$.

Next, we upper bound the KL between induced distributions which can be decomposed using chain rule \cite[Lemma 15.1]{lattimore}:
\begin{align*}
	\KL(\mc{E}_1,\mc{E}_a) &= \mb{E}_{\mc{E}_1}[N_T(a)] \cdot \KL\left( \Ber\left( \half - \frac{R}{4T}\cdot \left(\frac{K}{K-1}\right) \right),\Ber\left( \half + \Delta \right) \right).
\end{align*}
By reverse Pinsker's inequality for Bernoulli random variables \cite[Remark 33]{sason16},
\[
	\KL\left( \Ber\left( \half - \frac{R}{4T}\cdot \left(\frac{K}{K-1}\right) \right),\Ber\left( \half + \Delta \right) \right) \leq \frac{4}{\half - \Delta} \cdot \left( \frac{R}{4T} \cdot \left(\frac{K}{K-1}\right) + \Delta\right)^2.
\]
Now, since $\Delta = \frac{R}{8T} \leq \frac{1}{8}\sqrt{\frac{K}{T}} \leq \frac{1}{8}$, we have the above RHS is upper bounded by $\frac{25}{6}\cdot (R/T)^2$.
Now, since $R$ is at most $\sqrt{TK}$ and $\mb{E}_{\mc{E}_1}[N_T(a)] \leq \frac{T}{K-1}$, we have:
\[
	R_{\mc{E}_1}(\pi,T) + R_{\mc{E}_a}(\pi,T) \geq \frac{R}{32}\cdot \exp\left( - \frac{25T}{6(K-1)} \cdot \left(\frac{R}{T}\right)^2 \right) \geq \frac{R}{32} \cdot e^{-25/12}
\]
It's left to verify that round $T$ is a valid eviction time for both environments $\mc{E}_1$ and $\mc{E}_a$, or that \Cref{eq:sig-regret-under} holds for $\mc{S}_t \equiv [K]$.
Indeed, for $\mc{E}_1$, we have for any $a \in [K]$:
\[
	R \leq \sqrt{TK} \leq 2 \sqrt{TK} \left(\frac{K-1}{K} \right) \implies \sum_{s=s_1}^{s_2} \frac{\delta_t(a)}{K} \leq \sum_{s=s_1}^{s_2} \left(\frac{R \cdot K}{4 T \cdot (K-1)}\right)\cdot  \frac{1}{K} \leq \sqrt{ \frac{ s_2-s_1+1}{K} },
\]
for any $[s_1,s_2] \subseteq [1,T]$.
A similar calculation applies for environment $\mc{E}_a$.
\end{proof}

Now, equipped with this base lower bound, to prove \Cref{thm:lower-gap}, we concatenate the above construction $K-2$ times (each time removing an arm from the armset) to establish a lower bound of order $R$ for any set of rounds $\{t_i\}_{i=1}^K$ by concatenating environments for $K-2$ different eviction times (we have $t_{K-1} = t_K = T+1$ since we need two arms to force a lower bound in the last segment).

First, note though that if $\{t_i\}_{i=1}^K$ are valid eviction times, then by summing \Cref{eq:sig-regret-under} over arms $a$ and periods $[s_1,s_2] = [t_{i-1},t_i-1]$:
\[
	\sum_{i=1}^{K-1} \sum_{t=t_{i-1}}^{t_i-1} \mb{E}_{a \sim \Unif\{\mc{S}_t\}}[ \delta_t(a)] \leq \sum_{i=1}^{K-1} \sqrt{(t_{i+1} - t_i) \cdot (K+1-i)}.
\]
Next, we claim we can find a partition of $R = \sum_{i=1}^{K-1} R_i$ such that
\[
	R_i \leq \sqrt{(t_{i} - t_{i-1})\cdot (K+1-i)},
\]
for all $i \in [K-1]$:
specifically, let $R_1 \doteq \min\{ \sqrt{(t_1-1)\cdot K}, R\}$ and recursively define
\[
	R_i \doteq \min\left\{ \sqrt{(t_{i} - t_{i-1}) \cdot (K+1-i)}, R - \sum_{j=1}^{i-1} R_j \right\}.
\]
By virtue of $R \leq \sum_{i=1}^{K-1} \sqrt{(t_{i} - t_{i-1}) \cdot (K+1-i)}$, we claim there must exist an index $i$ for which $R_i = R - \sum_{j=1}^{i-1} R_j$, in which case all subsequent $R_{i+1},\ldots,R_{K-1}$ are zero and $\sum_{i=1}^{K-1} R_i = R$.
If such an index did not exist, then we'd have $\sum_{i=1}^{K-1} \sqrt{(t_{i} - t_{i-1}) \cdot (K+1-i)} < R$ by considering index $i = K-1$, which is a contradiction.

Next, note that \Cref{thm:lower-base} can be applied over each period $[t_{i-1},t_i-1]$ with a different armset $\mc{S}_{t_i}$ of size $K+1-i$ to obtain a lower bound of order $R_i$.
Letting $\mc{S}_1 = [K]$, each subsequent armset $\mc{S}_{t_i}$ will be defined to randomly exclude an arm in $\mc{S}_{t_{i-1}}$.

We next claim that our concatenated environments lie in the class $\mc{E}$.
First, note that by design:
\[
	\sum_{i=1}^{K-1} \sum_{t=t_{i-1}}^{t_i-1} \mb{E}_{a\sim \Unif\{\mc{S}_t\}}[ \delta_t(a) ] \leq \sum_{i=1}^{K-1} R_i = R \leq \sum_{i=1}^{K-1} \sqrt{(t_i - t_{i-1}) \cdot (K+1-i)}.
\]
Next, we claim that $\{t_i\}_{i=1}^K$ are valid eviction times.
This follows in a similar fashion to the proof of \Cref{thm:lower-base}.
Consider a generic subinterval $[s_1,s_2] \subseteq [1,t_i-1]$ and break it up according to the periods $[t_{j-1},t_j]$ which intersect it.
Now, let $[s_{1,j}, s_{2,j}] \doteq [s_1,s_2] \cap [t_{j-1},t_j-1]$.
Then, by Jensen's inequality, we have since $j \leq K-2$.
\[
	\sum_{j:[t_{j-1},t_j-1] \cap [s_1,s_2] \neq \emptyset} \left( \frac{1}{2} \sqrt{\frac{K+1-j}{t_j - t_{j-1}}} \right) \cdot \frac{(s_{2,j} - s_{1,j}) }{K+1-j} \leq \sqrt{ (K-2)\sum_{j: [t_{j-1},t_j-1] \cap [s_1,s_2] \neq \emptyset} \frac{s_{2,j} - s_{1,j}}{K+1-j} }.
\]


\subsection{Proof of \Cref{thm:phase-transition}}\label{subsec:proof-phase}

We first show \ref{item:1}.
Let $a \in \argmin_{a\in [K]} f_a(1/T)$ be an optimal arm at round $1$.
Let $m \doteq \floor{\beta}$.
Then, by Taylor's Theorem with Lagrange remainder, we have for all $x \in [0,1]$, there exists $\xi \in [0,1]$ such that:
\begin{align*}
	f_a(x) &= f_a(x) - f_a(1/T) \\
	       &\leq \sum_{k=1}^{m - 1}  \frac{|f_a^{(k)}(x)|}{k!} \cdot |x - 1/T|^k + \frac{|f_a^{(m)}(\xi)|}{m!} \cdot |x - 1/T|^m \\
			    &\leq \lambda_{\max} \sum_{k=1}^{m} \frac{|x-1/T|^k}{k!} \\
			    &\leq (e^{x-1/T} - 1) \cdot \sqrt{\frac{K}{T}}.
\end{align*}
Now, this means arm $a$ must be safe in the sense of \Cref{eq:safe-arm} for suitable constant $C_3$ since the gap at any round cannot exceed $\sqrt{K/T}$.
This shows \ref{item:1}.

Next, we show \ref{item:2}.
Fix a real number $\lambda > \sqrt{K/T}$ consider the class $\msc{E}$ of environments with
\[
	\sup_{a \in [K]} \sup_{x \in [0,1]}|f_a^{(n)}(x)| \leq \lambda.
\]
Now, one can verify that, for $(\beta',\lambda') = (n,\lambda)$, the constructed bump function reward environments in the proof of \Cref{thm:lower-bound} for H\"{o}lder class $\Sigma(\beta',\lambda')$ satisfy the above inequality and hence lie in the class $\msc{E}$.
Thus, for any algorithm $\pi$, the minimax regret over class $\msc{E}$ is
\[
	\sup_{\mc{E} \in \msc{E}} R_{\mc{E}}(\pi, T) \geq \Omega( T^{\frac{n+1}{2n+1}} \cdot K^{\frac{n}{2n+1}} \cdot \lambda^{\frac{1}{2n+1}} ) \geq \Omega( \sqrt{KT}),
\]
where the last inequality follows from $\lambda > \sqrt{K/T}$.

\section{Auxilliary Lemmas}

\begin{lem}[Bretagnolle-Huber Inequality; Theorem 14.2 of \cite{lattimore}]\label{lem:bh}
	Let $P$ and $Q$ be probability measures on the same measurable space $(\Omega,\mc{F})$, and let $A \in \mc{F}$ be an arbitrary event.
	Then,
	\[
		P(A) + Q(A^c) \geq \half \exp\left( - \KL(P,Q) \right),
	\]
	where $A^c = \Omega \bs A$ is the complement of $A$.
\end{lem}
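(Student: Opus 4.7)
The plan is to route through the Hellinger affinity (also called the Bhattacharyya coefficient)
\[
H(P,Q) \doteq \int \sqrt{fg}\,d\nu,
\]
where $f \doteq dP/d\nu$ and $g \doteq dQ/d\nu$ for a common dominating measure $\nu$ (for instance $\nu = (P+Q)/2$). If $\KL(P,Q) = \infty$ the right-hand side of the claim is $0$ and the inequality is trivial, so I may assume $P \ll Q$. The two ingredients I will assemble are: (i) an upper bound on $H(P,Q)^2$ in terms of the quantity $P(A) + Q(A^c)$, obtained via a carefully chosen Cauchy--Schwarz split, and (ii) the classical lower bound $H(P,Q)^2 \geq \exp(-\KL(P,Q))$, obtained from Jensen's inequality.

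For (i), I would apply the Cauchy--Schwarz inequality with the pair
\[
u \doteq \mb{1}_A \sqrt{f} + \mb{1}_{A^c}\sqrt{g}, \qquad v \doteq \mb{1}_A \sqrt{g} + \mb{1}_{A^c}\sqrt{f},
\]
which satisfy $uv = \sqrt{fg}$ pointwise. This yields
\[
H(P,Q)^2 = \left(\int uv\,d\nu\right)^{\!2} \leq \left(\int u^2\,d\nu\right)\!\left(\int v^2\,d\nu\right) = \bigl(P(A) + Q(A^c)\bigr)\bigl(P(A^c) + Q(A)\bigr).
\]
Setting $x \doteq P(A) + Q(A^c)$, the second factor equals $2 - x \leq 2$, so $H(P,Q)^2 \leq 2x$, and hence $P(A) + Q(A^c) \geq H(P,Q)^2/2$.

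For (ii), since $\log$ is concave, Jensen's inequality applied under $P$ gives
\[
\log H(P,Q) = \log \mb{E}_P\!\left[\sqrt{g/f}\,\right] \geq \mb{E}_P\!\left[\log \sqrt{g/f}\,\right] = -\tfrac{1}{2}\KL(P,Q),
\]
so $H(P,Q)^2 \geq \exp(-\KL(P,Q))$. Chaining (ii) into (i) delivers $P(A) + Q(A^c) \geq \tfrac{1}{2}\exp(-\KL(P,Q))$, which is exactly the claim.

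The main obstacle is selecting the right Cauchy--Schwarz split in step (i): the naive within-set splits $(\int_A \sqrt{fg})^2 \leq P(A)Q(A)$ and $(\int_{A^c} \sqrt{fg})^2 \leq P(A^c)Q(A^c)$ break the coupling between $P(A)$ and $Q(A^c)$ and only produce terms of the form $\sqrt{P(A)Q(A)} + \sqrt{P(A^c)Q(A^c)}$, which do not directly compare to $P(A) + Q(A^c)$. The symmetric cross-pairing above is what cleanly groups $P(A)$ with $Q(A^c)$, producing the exact combination appearing in the inequality. The rest is routine once the Hellinger-affinity viewpoint is adopted.
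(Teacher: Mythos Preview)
Your proof is correct. The paper itself does not prove this lemma; it is stated as an auxiliary result with a citation to Theorem 14.2 of \cite{lattimore}, so there is no in-paper argument to compare against. For what it is worth, your route via the Hellinger affinity (Bhattacharyya coefficient) and the clever Cauchy--Schwarz pairing $u = \mb{1}_A\sqrt{f} + \mb{1}_{A^c}\sqrt{g}$, $v = \mb{1}_A\sqrt{g} + \mb{1}_{A^c}\sqrt{f}$ is essentially the standard textbook proof, and your handling of the $\KL(P,Q)=\infty$ case and the Jensen step are both sound.
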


\rev{
\section{Experimental Results on Synthetic Data}

\begin{figure}[h]
	\centering
	\includegraphics[scale=0.47]{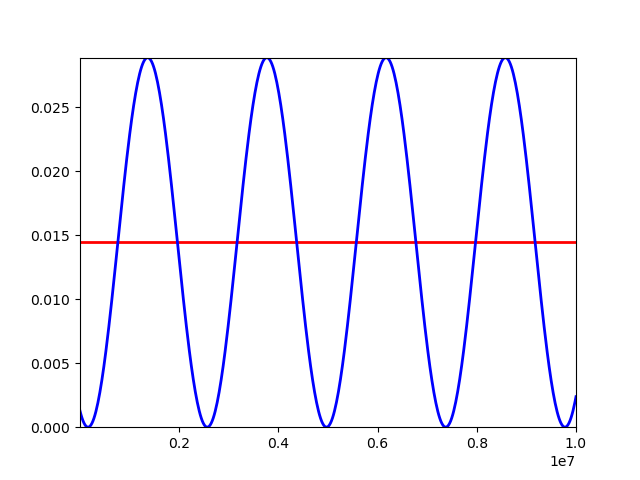}
	\includegraphics[scale=0.47]{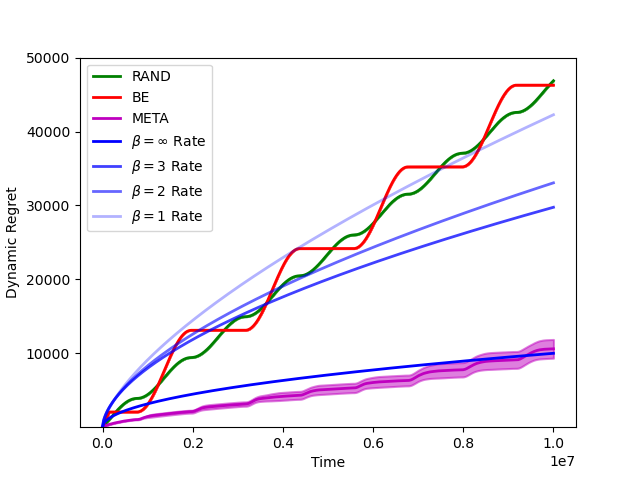}
	\caption{In order, plot of rewards and plot of regret over time $t \in [10^7]$.
	}\label{fig:exp}
\end{figure}

We present the results of implementing three algorithms for smooth non-stationary bandits: \meta (\Cref{meta-alg}), the Budgeted Exploration (BE) algorithm of \cite{jia2023}, and a random baseline \texttt{RAND} which selects an arm uniformly at random.
The code can be found at \href{https://github.com/joesuk/SmoothBandits}{https://github.com/joesuk/SmoothBandits}.

We consider a synthetic environment with {\em trigonometric} mean reward functions similar to those used in \cite{jia2023}.
In particular, we have $K=2$ arms and trigonometric rewards parametrized by an amplitude $A$, frequency $\nu$, and phase-shift $\vphi$:
\begin{align*}
	\mu_1(t) &:= A \\
	\mu_2(t) &:= A - A \sin(2\pi \nu t / T + \vphi),
\end{align*}
where we drew $A = \nu^{-2}$, $\vphi \sim \Unif\{[0,2\pi]\}$, and $\nu \sim \Unif\{[2.5,5]\}$.
The results of \Cref{fig:exp} use
\begin{align*}
	A &= 0.01444588223139156 \\
	\nu &= 8.320088866618766 \\
	\vphi &= 1.1478977247810018.
\end{align*}
These reward functions are $C^{\infty}$ smooth in normalized time and are thus H\"{o}lder class for any exponent $\beta > 0$.
In particular, the function $f(x) := A - A \sin(2\pi \nu x + \vphi)$, has H\"{o}lder coefficients:
\begin{align*}
	\lambda_1 &:= \sup_{x\in [0,1]} f^{(1)}(x) =  \pi/\nu \\
	\lambda_2 &:= \sup_{x\in [0,1]} f^{(2)}(x) = \pi^2 \\
	\lambda_n &:= \sup_{x \in [0,1]} f^{(n)}(x) = \pi^n \cdot \nu^{n-2}.
\end{align*}

In \Cref{fig:exp}, we plot the mean regret curves (with confidence bands of one standard deviation) for $100$ different simulations with a horizon of $T = 10^7$ and Gaussian reward noise with variance $0.001$.
We also plot, in varying shades of blue, the regret curves, for $\beta = 1,2,3,\infty$, the theoretical minimax rate of $T^{\frac{\beta+1}{2\beta+1}} \lambda^{\frac{1}{2\beta+1}} K^{\frac{\beta}{2\beta+1}}$ of \Cref{thm:lower-bound}.
Here, ``$\beta=\infty$'' means we consider the parametric regret rate of $\sqrt{(\Lsig+1) K T}$ in terms of $\Lsig$ significant shifts, which scales like \Cref{eq:regret-bound-sig} and lower bounds the minimax regret rate for H\"{o}lder class rewards by \Cref{thm:upper-smooth-all}.
For the parameter choices of $A,\nu,\vphi$ above and $T=10^7$, one can verify that there are $\Lsig=4$ significant shifts, and so our ``$\beta=\infty$'' curve in \Cref{fig:exp} simply corresponds to the function $t \mapsto \sqrt{10t}$.
Note also, for the trigonometric reward model that
\[
	\lim_{n\to\infty} T^{\frac{n+1}{2n+1}} \cdot \lambda_n^{\frac{1}{2n+1}} \cdot K^{\frac{n}{2n+1}} = \sqrt{\pi \cdot \nu \cdot KT},
\]
Thus, this parametric $\sqrt{KT}$ rate does indeed capture the limiting regret as the smoothness $\beta \to \infty$.

The BE algorithm was implemented using the theoretically rate-optimal parameters of Theorem 4.2 of \cite{jia2023}.
Our implementation also slightly modifies \meta (\Cref{meta-alg}) to only check the elimination criterion \Cref{eq:elim} (Lines~\ref{line:evict-At-base} and \ref{line:evict-master}) over intervals $[t_0,t]$ of dyadic length, or such that $2^{t-t_0+1} = 2^n$ for some $n\in\mb{N}$, which was done to ensure reasonable computation time.

From our plot, we conclude that \meta indeed outperforms BE (which performs similarly to a naive random baseline here), and is thus able to leverage higher-order smoothness beyond the $\beta = 2$ H\"{o}lder class accounted for by BE.

Interestingly, we also see that \meta's regret curve closely matches the parametric regret rate $\sqrt{(\Lsig+1) KT}$ in terms of significant shifts.
This empirically validates the findings of \Cref{thm:upper-smooth-all} and \Cref{cor:upper}.
}

\end{document}